\documentclass{article}

\usepackage{amsmath,amssymb,pifont}
\usepackage{multicol}
\usepackage{amstext}
\usepackage{amsthm}
\usepackage{multirow}
\usepackage{booktabs}
\usepackage[skip=0pt]{subcaption}
\usepackage{times}
\usepackage{lipsum}
\usepackage[shortlabels]{enumitem}
\usepackage{cancel}
\usepackage{wrapfig}
\usepackage{array}
\usepackage{siunitx}
\usepackage{csvsimple}
\usepackage[multidot]{grffile}
\usepackage{bbm}
\usepackage{dblfloatfix}
\usepackage{hyperref}
\usepackage{makecell}
\usepackage{bbm, dsfont}
\usepackage{mathtools}
\usepackage{comment}

\usepackage{mathpazo}
\usepackage{enumitem}

\usepackage[multiple]{footmisc}
\usepackage{mathrsfs}
\usepackage{todonotes}
\usepackage{tikz}
\usepackage{hyperref}
\usepackage[capitalise]{cleveref}

\usepackage{algpseudocode,algorithm,algorithmicx}
\usepackage{xfrac}

\usepackage{graphicx} %
\usepackage{color}

\usepackage{array}
\usepackage{amssymb}
\usepackage{amsmath}
\usepackage{xspace}
\usepackage{fancyhdr}
\usepackage{comment}
\usepackage[normalem]{ulem}
\newtheorem{lem}{Lemma}[section]

\newtheorem{thm}[lem]{Theorem}

\newtheorem{cor}[lem]{Corollary}

\hypersetup{final}



\newcommand{\boldzero}{\ensuremath{\boldsymbol{0}}}

\newcommand{\bfC}{\ensuremath{\mathbf{C}}}

\newcommand{\bfb}{\ensuremath{\mathbf{b}}}

\newcommand{\bfg}{\ensuremath{\mathbf{g}}}

\newcommand{\bfq}{\ensuremath{\mathbf{q}}}

\newcommand{\bfv}{\ensuremath{\mathbf{v}}}

\newcommand{\bfx}{\ensuremath{\mathbf{x}}}
\newcommand{\bfy}{\ensuremath{\mathbf{y}}}
\newcommand{\bfz}{\ensuremath{\mathbf{z}}}

\newcommand{\calA}{\ensuremath{\mathcal{A}}}
\newcommand{\calB}{\ensuremath{\mathcal{B}}}
\newcommand{\calC}{\ensuremath{\mathcal{C}}}
\newcommand{\calD}{\ensuremath{\mathcal{D}}}

\newcommand{\calL}{\ensuremath{\mathcal{L}}}

\newcommand{\calN}{\ensuremath{\mathcal{N}}}


\newcommand{\polylog}[1]{\ensuremath{{\rm polylog}\left(#1\right)}}



\DeclareMathOperator*{\argmin}{arg\,min}

\makeatletter
\newcommand{\vast}{\bBigg@{4}}
\newcommand{\Vast}{\bBigg@{5}}
\makeatother

\newcommand{\ex}[2]{{\ifx&#1& \mathbb{E} \else
\underset{#1}{\mathbb{E}} \fi \left[#2\right]}}
\newcommand{\pr}[2]{{\ifx&#1& \mathbb{P} \else
\underset{#1}{\mathbb{P}} \fi \left[#2\right]}}

\newcommand{\Var}[1]{\ensuremath{\mathbf{Var}\left(#1\right)}}

 \newcommand{\ip}[2]{\left\langle #1, #2\right \rangle}

\newcommand{\ltwo}[1]{\left\|#1\right\|_2}

\usepackage{ulem}

\DeclarePairedDelimiterX{\infdivx}[2]{(}{)}{%
  #1\;\delimsize\|\;#2%
}

\newcommand{\blue}[1]{{\color{blue}#1}}
\newcommand{\red}[1]{{\color{red}#1}}

\newcommand{\mypar}[1]{\smallskip
	\noindent{\textbf{{#1}:}}}
	
\renewcommand{\epsilon}{\varepsilon}

\renewcommand{\tilde}{\widetilde}

\setlist{nolistsep}
\setlist[itemize]{noitemsep, topsep=0pt}

\setlist{nolistsep}
\setlist[itemize]{noitemsep, topsep=0pt}

\newcommand{\algo}{\texttt{Accelerated-DP-SRGD}\,}
\newcommand{\dpftrl}{\texttt{DP-FTRL}\,}
\newcommand{\dpsgd}{\texttt{DP-SGD}\,}

\usepackage{fullpage}
\usepackage[numbers, sort, comma, square]{natbib}
\usepackage[utf8]{inputenc} 
\usepackage[T1]{fontenc}    
\usepackage{hyperref}       
\usepackage{url}            
\usepackage{booktabs}       
\usepackage{amsfonts}       
\usepackage{nicefrac}       
\usepackage{microtype}      
\usepackage{xcolor}         

\usepackage{hyperref}
\hypersetup{linktoc=all,colorlinks=true,linkcolor={red!50!black},
    citecolor={blue!50!black},
    urlcolor={blue!80!black}}
\title{Optimal Rates for $O(1)$-Smooth DP-SCO with a\\ Single Epoch and Large Batches}

\author{
  Christopher A. Choquette-Choo\footnote{Google DeepMind. \texttt{cchoquette@google.com}} \and
  Arun Ganesh\footnote{Google Research. \texttt{arunganesh@google.com}} \and
  Abhradeep Thakurta\footnote{Google DeepMind. \texttt{athakurta@google.com}}
}

\begin{document}

\maketitle

\begin{abstract}
In this paper we revisit the DP stochastic convex optimization (SCO) problem. For convex smooth losses, it is well-known that the canonical DP-SGD (stochastic gradient descent) achieves the optimal rate of $O\left(\frac{LR}{\sqrt{n}} + \frac{LR \sqrt{p \log(1/\delta)}}{\epsilon n}\right)$ under $(\epsilon, \delta)$-DP \citep{BST14}, and also well-known that variants of DP-SGD can achieve the optimal rate in a single epoch \citep{feldman2019private}. However, the batch gradient complexity (i.e., number of adaptive optimization steps), which is important in applications like federated learning, is less well-understood. In particular, all prior work on DP-SCO requires $\Omega(n)$ batch gradient steps, multiple epochs, or convexity for privacy.

We propose an algorithm, \algo (stochastic recursive gradient descent), which bypasses the limitations of past work: it achieves the optimal rate for DP-SCO (up to polylog factors), in a single epoch using  $\sqrt{n}$ batch gradient steps with batch size $\sqrt{n}$, and can be made private for arbitrary (non-convex) losses via clipping. If the global minimizer is in the constraint set, we can further improve this to $n^{1/4}$ batch gradient steps with batch size $n^{3/4}$. To achieve this, our algorithm combines three key ingredients, a variant of stochastic recursive gradients (SRG), accelerated gradient descent, and correlated noise generation from DP continual counting.
\end{abstract}

\section{Introduction}
\label{sec:intro}

Differentially private stochastic convex optimization (DP-SCO) and its close counterpart, DP empirical risk minimization (DP-ERM) are two of the most well studied problems in private machine learning \citep{BST14,bassily2019private,feldman2019private,bassily2020stability,asi2021private,bassily2021non,zhang2022differentially,asi2021adapting,kulkarni2021private,gopi2022private,chaudhuri2011differentially,kifer2012private}. Optimal rates are known for nearly all classes of convex losses, e.g., Lipschitz (and smooth)~\citep{bassily2019private,feldman2019private}, and Lipschitz with strong convexity~\citep{gopi2022private}. There also have been extensive work in the recent past~\citep{kulkarni2021private,feldman2019private,gopi2022private,gao2024privateheterogeneousfederatedlearning,zhang22bringyourown} optimizing for both the number of gradient oracle calls~\citep{bubeck2015convex} and rounds of adaptive interactions~\citep{smith2017interaction,woodworth2018graph} while achieving optimal DP-SCO error. 

In this work, we revisit the problem of reducing both the gradient complexity, and the number of adaptive interactions (a.k.a.~\emph{batch gradient steps}) for smooth\footnote{Unless mentioned otherwise, we refer to $O(L/\ltwo{\calC})$ $\ell_2$-smoothness. For brevity, we will sometimes refer to this as $O(1)$-smoothness as this gives smoothness parameter $M = O(1)$ if $L, \ltwo{\calC} = \Theta(1)$, and $M = O(L/\ltwo{\calC})$ is within a constant of the lower bound $M \geq L/\ltwo{\calC}$ that holds if $\calC$ contains the global minimizer.} and convex losses. We show that it is possible to obtain optimal DP-SCO error (upto $\text{polylog} (n, 1/\delta)$ factors), in a single epoch algorithm with sublinear batch gradient steps.

\mypar{Problem definition} There is a distribution of examples $\calD$ and a loss  $f(\bfx, d)$ where $\bfx \in \calC \subseteq \mathbb{R}^p$ is a model in the constraint set $\calC$, $\ltwo{\calC} \leq R$, and $d$ is an example. We assume for any $d \in supp(\calD)$, $f$ is convex, $L$-Lipschitz, and $M$-smooth over $\calC$. We assume for some $c_M = O(1)$, we have $M \leq c_M L/\ltwo{\calC}$. We receive a sequence of $n$ i.i.d. examples $D=\{d_1, d_2, \ldots, d_n\} \sim \calD$. Using these examples, our goal is to choose a model in $\calC$ that minimizes the population risk function $F(\bfx)=\mathbb{E}_{d\sim\calD}\left[f(\bfx, d)\right]$. We will operate in the single-pass streaming setting: we must process examples in the order they appear, and each example appears only once in this order. Finally, the model we output should satisfy $(\epsilon, \delta)$-DP w.r.t. the examples. For simplicity, we use the zero-out notion of DP \citep{ponomareva23dpfy}: if $P$ is the distribution of models our algorithm outputs given $\{d_1, d_2, \ldots, d_n\}$ and $Q$ is the distribution given the same sequence, except one example $d_i$ instead has $\nabla f(\bfx, d_i) = 0$ for all $\bfx$, then we require $\max_S [P(S) - e^\epsilon Q(S)], \max_S [Q(S) - e^\epsilon P(S)] \leq \delta$.

\mypar{Our result} We show the following:

\begin{center}
    \begin{enumerate}
        \item  {\bf Unconstrained optimization}: Assuming the unconstrained population risk minimizer  is in the interior of the constraint set (i.e., $\bfx^\dagger=\argmin_{\bfx\in\mathbb{R}^p} F(\bfx)$, and $\bfx^\dagger\in\calC$)\footnote{In the unconstrained optimization setting, a standard assumption is a known bound $R$ on the norm of $\bfx^\dagger$ and we can set $\calC = B(\boldzero, R)$ to satisfy our assumption. Hence, our assumption can be viewed as implying an unconstrained optimization of the population risk $F(\bfx)$. A similar assumption was made in~\cite[Theorem 8]{xiao2009dual} while designing accelerated version of dual averaging methods.}, \emph{there exists a gradient based DP algorithm that makes a single pass over the data set (a.k.a. single epoch) and achieves optimal DP-SCO error (within polylog factors) for smooth and convex losses, with $T=\tilde \Theta(n^{1/4})$ batch gradient steps\footnote{Here, $n$ is the number of training examples.}, batch size $B=n/T$, and using exactly two gradients per  training example}.
        \item {\bf Constrained optimization}: We also show that even without assuming $\bfx^\dagger\in\calC$,~\emph{our algorithm obtains the optimal DP-SCO bounds with $T = \sqrt{n}$ batch gradient steps, batch size $B=\sqrt{n}$ and in a single epoch.}
    \end{enumerate}
\end{center}

Our work uniformly improves over all prior~\citep{feldman2019private,zhang2022differentially,zhang22bringyourown} and contemporary~\citep{gao2024privateheterogeneousfederatedlearning} works~\emph{under the assumption that $\bfx^\dagger\in\calC$}. Under smoothness assumption, prior SoTA (in terms of batch gradient complexity) required gradient complexity of $\Tilde \Theta (n)$, and with batch gradient complexity of $T=\Tilde \Theta (\sqrt n)$. Under a single-epoch constraint, to the best of our knowledge no prior work achieves $T = o(n)$. In comparison, our work improves the gradient complexity to $2n$, and batch gradient steps to $T=\tilde\Theta(n^{1/4}$). A contemporary work~\citep{gao2024privateheterogeneousfederatedlearning} matches our batch gradient steps ($T=\Tilde\Theta(n^{1/4})$). However, they have worse gradient complexity of $\Tilde \Theta(n^{9/8})$. 

Our guarantees, even without assuming $\bfx^\dagger\in\calC$, \emph{improves over all prior work} (at least by $\text{polylog}(n)$), and is incomparable to the contemporary work of~\cite{gao2024privateheterogeneousfederatedlearning}. (See~\Cref{tab:Comparison}.) 

Works of~\citep{feldman2019private, carmon2023resqueing, gao2024privateheterogeneousfederatedlearning} also study DP-SCO with higher smoothness conditions of e.g. $M=O(\sqrt n)$, which by Nesterov smoothing~\citep{nesterov2005smooth} corresponds to the non-smooth setting. However, these works require $n^{\Omega(1)}$ passes over the dataset; to the best of our knowledge there is no result in the literature giving optimal rates for DP-SCO in $n^{o(1)}$ epochs in the non-smooth setting (for all settings of the dimension $p$; \citep{carmon2023resqueing} uses $O(1)$ epochs if $p \leq \epsilon^{3/2}$). Since we are interested in the~\emph{single-epoch setting}, we hence only focus on $O(1)$-smoothness, and higher smoothness is beyond scope. As an aside, in the $M=O(\sqrt n)$ regime,~\citep{carmon2023resqueing} achieves worst-case gradient complexity of $\tilde\Theta(n^{4/3})$, as opposed to $\tilde\Theta(n^{11/8})$ by~\citep{gao2024privateheterogeneousfederatedlearning}. 

\mypar{Gradient complexity and adaptive interaction complexity (a.k.a. batch gradient steps)} We measure the computational complexity of our algorithms under standard complexity measures in the convex optimization literature: a) \emph{Gradient complexity}~\citep{bubeck2015convex}: It measures the total number of gradient evaluations of the loss function $f(\bfx,\cdot)$, and b)~\emph{Batch gradient steps (a.k.a. adaptive interaction complexity) }\footnote{Henceforth, we will refer to it as batch gradient steps for consistency.} \citep{smith2017interaction,woodworth2018graph}: It corresponds to the number of number of adaptive gradient computations ($T$) the algorithm has to perform, i.e., the current computation depends on the output of the gradient computations in the previous state. Understanding the complexity of an algorithm in terms of number of batch gradient steps have significant attention in the recent past~\citep{smith2017interaction,woodworth2018graph,diakonikolas2020lower,carmon2023resqueing}. It is especially important in the context of distributed/federated optimization, where adaptive interactions are much more expensive compared to parallel gradient evaluations at a fixed model state. For example, in cross-device federated learning~\citep{kairouzadvances}, the number of rounds adaptive server/device communication dominate the overall optimization complexity. Furthermore, our algorithm has additional desirable properties, which are crucial in any practical deployment:

\begin{enumerate}
    \renewcommand{\labelenumi}{(\alph{enumi})}
    \item \emph{Single pass/single epoch}: A gradient based algorithm $\calA(D)\to\{s_1,\ldots,s_T\}$ is called single epoch, if it computes the gradients on any data point $d\in D$ at exactly one state $s_t$. Without additional restrictions, a state is allowed to encode arbitrary amount of history, i.e., $s_t\supseteq\{s_1,\ldots,s_{t-1}\}$. Single epoch~\footnote{Notice that the definition does not include any restriction on the memory of the algorithm. While stating our results in the single epoch setting, we additionally incorporate the batch size, and the total number gradients evaluated per data point.} algorithms are crucial in learning over data streams (where you can process any data point at any single model state), or in cross-device federated learning (FL) where a device holding a specific data point appears only once during the training~\citep{balle2020privacy}. 
    
    It is worth emphasizing that there is a~\emph{crucial difference between an algorithm using a single epoch and an algorithm that performs even two epochs over the data; one may be implementation wise infeasible} because of the reasons stated above. So, in~\cref{tab:Comparison} even if the gradient complexity matches asymptotically between two algorithms, still one may be infeasible to implement due to the fact of not being single epoch.
    \item~\emph{Assuming convexity for privacy}: In DP-SCO literature (see~\Cref{tab:Comparison}), prior works have assumed that the loss function $f(\bfx,d)$ is convex to prove both the privacy and utility guarantee of their algorithms. Assuming convexity for differential privacy guarantees extremely restricts the applicability of the algorithms in practical scenarios where provable convexity property may not hold (e.g., training non-convex models with DP-SGD~\citep{DP-DL}, or simply clipping the gradients to a fixed $\ell_2$-norm~\citep{song2020characterizing}.)
\end{enumerate}

Surprisingly, our algorithms, even being single epoch and not assuming convexity for privacy, uniformly beat the prior SoTA in terms of gradient complexity and batch gradient steps.

\begin{table}[]
\centering
\begin{tabular}{|l||l|l|l|l|}
\hline
\textbf{Algorithms} & \textbf{Gradient} & \textbf{Batch gradient} & \textbf{Single} & \textbf{Convexity} \\
& \textbf{complexity} & \textbf{steps} & \textbf{epoch} & \textbf{for privacy} \\
\hline\hline
\blue{Ours (assuming $\bfx^\dagger \in\calC$)} & $2n$ & $\tilde \Theta(n^{1/4})$ & Yes & No \\
\hline
\blue{Ours (no assumption)} & $2n$ & $\sqrt{n}$ & Yes & No \\
\hline
\hline
\citep{feldman2019private} & $n$ & \red{$\tilde\Theta(n)$} & Yes & \red{Yes} \\
\hline
\citep{zhang2022differentially} & $2n$ & \red{$n$} & Yes & No \\
\hline
\citep{zhang22bringyourown} & \red{$\tilde{\Theta}(n)$} & $\tilde{\Theta}(\sqrt{n})$ & \red{No} & \red{Yes}  \\
\hline
\citep{gao2024privateheterogeneousfederatedlearning} & \red{$\tilde{\Theta}(n^{9/8})$} & $\tilde{\Theta}(n^{1/4})$ & \red{No} & No \\
(Contemporary) & & & & \\
\hline
\end{tabular}
\caption{Comparison of gradient complexity and batch gradient steps. Here, $\bfx^\dagger= \argmin_{\bfx \in \mathbb{R}^p}F(\bfx)$ is the unconstrained population risk minimizer, and the Lipschitz constant ($L$) and the smoothness required for all results is $O(1)$. $\tilde \Theta(\cdot)$ hides $\text{polylog}(n)$ factors.}
\label{tab:Comparison}
\end{table}

\mypar{Our algorithm [\algo (DP stochastic recursive gradient descent)]}  In terms of design, our algorithm is based on a careful adaptation of~\emph{Nesterov's accelerated stochastic gradient descent}~\citep{nesterov1983method,bansal2019potential}, combined with the ideas of~\emph{stochastic recursive gradients} (SRG)~\citep{nguyen2017sarah,zhang2022differentially}, and~\emph{private continual counting mechanisms}~\citep{Dwork-continual,CSS11-continual,denisov2022improved,matouvsek2020factorization,fichtenberger2022constant,henzinger2023almost,henzinger2024unifying,andersson2023smooth}. In the hindsight, \algo is arguably more natural than prior/contemporary adaptations of accelerated methods~\citep{gao2024privateheterogeneousfederatedlearning,zhang22bringyourown} towards reducing the computational complexity of obtaining optimal DP-SCO guarantees.  

Recall, Nesterov's acceleration can be expressed as the coupling of two gradient descent based algorithms with different learning rates. More formally, the following are the update rules\footnote{Here, $\Pi_\calC(\cdot)$ is the $\ell_2$-projection onto to the constraint set,  $\{\eta_t\}$ and $\beta$ are scaling parameters, and $\{\tau_t\}$ are the coupling parameters.}: a) $\bfz_{t+1}\leftarrow \Pi_\calC\left(\bfz_{t}-\frac{\eta_t}{\beta}\nabla_t\right)$, b) $\bfy_{t+1}\leftarrow\Pi_\calC\left(\bfx_t-\frac{1}{\beta}\nabla_t\right)$, and c) $\bfx_{t+1}\leftarrow \left(1-\tau_{t+1}\right)\bfy_{t+1}+\tau_{t+1}\bfz_{t+1}$.  In our implementation of the Nesterov's method, we will use a DP method to approximate the gradient computed at $\bfx_t$ computed on a minibatch $\calB_t$ of size $B$: $\nabla_t(\bfx_t)=\frac{1}{B}\sum\limits_{i\in\calB_t}\nabla f(\bfx_t;d_i)$. We use the idea of stochastic recursive gradients (SRGs) from~\citep{nguyen2017sarah,zhang2022differentially} to first approximate $\nabla_t$ as $\eta_t\nabla_t\approx \Delta_t+\Delta_{t-1}+\ldots$, where each $\Delta_t=\eta_t\nabla_t(\bfx_t)-\eta_{t-1}\nabla_t(\bfx_{t-1})$. We set $\eta_t=t$, $\beta \approx M$ if we assume $\bfx^\dagger \in \calC$ and $\beta\approx M\cdot\max\left\{\frac{n^{3/2}}{B^2},\frac{n}{B}\right\}$ otherwise, and $\tau_t=\frac{\eta_t}{\sum\limits_{\tau\leq t}\eta_\tau}$. Following are the few crucial differences from~\cite{zhang2022differentially}: a) We use a truly accelerated method to achieve the improved batch gradient steps, as opposed to tail-averaged online convex optimization methods (which does not include the term $\bfy_{t}$), and b) because of acceleration, we cannot tolerate higher values of $\eta_t$. In particular the results of~\cite{zhang2022differentially} hold with $\eta_t=t^k$ (for any $k\geq 1$). To bound the sensitivity of $\Delta_t$ for privacy, we need to either (i) assume $\bfx^\dagger \in \calC$, which allows us to show the gradient steps are decreasing over time and hence the gradient difference between $\bfx_t$ and $\bfx_{t-1}$ is small by smoothness, or (ii) slow down the learning by choosing $\beta\approx M\sqrt n$ (corresponding to the iterations $T = n/B \approx\sqrt n$), as opposed to $\beta=M$ in the non-private case. 

\mypar{Our analysis} While we argue above that our algorithm is a natural extension of Nesterov's accelerated SGD, the privacy and the utility analysis of the algorithm is novel, and far from immediate. We give a brief sketch of our analysis below.

At a high-level our analysis requires bounding the $\ell_2$-sensitivity of the gradient difference $\Delta_t$ to $O(\frac{1}{B\cdot t})$, and then use the binary tree mechanism \citep{Dwork-continual} to compute DP variants of each $\eta_t\nabla_t$ with low error. We finally get an SCO error of the form: $\frac{LR}{\sqrt{n}} + \frac{LR\sqrt{p}}{\epsilon n} + \frac{LRB}{n} + \frac{LRB\sqrt{p}}{\epsilon n^2}$, with $B$ being the minibatch size. The first two terms match the optimal DP-SCO error. Setting $B \leq \sqrt{n}$ gives that the third and fourth term also are at most the optimal DP-SCO error. The main challenge is bounding the sensitivity of term $\Delta_t$ (stated above). Bounding it naively to $O\left(\frac{t M}{B\beta}+L\right)$ (using the analysis of~\cite{zhang2022differentially}), may result in a sub-optimal choice of the batch size for the final DP-SCO guarantee. Instead we couple the privacy analysis and the utility analysis together. 

Recall, $\bfx^\dagger=\argmin_{\bfx\in\mathbb{R}^p} F(\bfx)$ is the unconstrained minimizer of the population risk $F(\bfx)$, and we assume $\bfx^\dagger\in\calC$. We observe that $\ltwo{\nabla F(\bfx)}^2\leq 2M(F(\bfx)-F(\bfx^\dagger))$ for any $\bfx\in\calC$. This allows us to get a tighter control over the norm of the difference in mini-batch gradients: $\frac{1}{B}\cdot\ltwo{\sum\limits_{d\in\calB}\nabla f(\bfx_t;d)-\sum\limits_{d\in\calB}\nabla f(\bfx_{t-1};d)}$, rather than bounding it bluntly by $\frac{M}{B \beta}$. As the algorithm proceeds and the excess population risk 
$F(\bfx)-F(\bfx^*)$ goes down, $\ltwo{\frac{1}{B}\sum\limits_{d\in\calB}\nabla f(\bfx_t;d)}$ becomes smaller. This in-turn can be used to obtain a stronger bound of $O\left(\frac{M}{B\cdot \beta t}\right)$, on the norm of the difference in mini-batch gradients. We use this to reduce the sensitivity of $\Delta_t$ to $O\left(\frac{M}{B\beta}+L\right)$. This is formalized via Lemma~\ref{lem:tighter-sens}.

If we do not assume that $\bfx^\dagger\in\calC$, then we proceed with the naive bound on the gradient difference, and compensate by increasing $\beta$, which in turn requires more iterations for the non-private algorithm to converge and hence a smaller batch size of $\Omega(\sqrt n)$ (as opposed to $B=\tilde \Omega(n^{3/4})$). This still improves on prior work at least by logarithmic factors. More importantly, we still keep the algorithm single epoch and do not require convexity, which is not possible with the comparable prior work of \cite{zhang22bringyourown}. 

\mypar{Comparison to relevant prior work} While DP-SCO has been studied extensively in the literature, we first focus on the works that are most relevant for this paper~\citep{feldman2019private,zhang2022differentially,gao2024privateheterogeneousfederatedlearning,zhang22bringyourown}. As summarized in~\Cref{tab:Comparison}, our bounds are uniformly better than prior and contemporary works under the assumption on the unconstrained population risk minimizer being in the interior of $\calC$ (i.e., $\bfx^\dagger\in\calC$). Without this assumption, our algorithm is still uniformly better than prior works, and incomparable to the contemporary work of~\cite{gao2024privateheterogeneousfederatedlearning}. Unlike many of the prior works, our algorithm is single epoch, and does not require convexity assumption to ensure differential privacy.

In terms of techniqiue, our algorithm heavily relies on the idea of coupling DP-continual observation~\citep{CSS11-continual,Dwork-continual,choquette2022multi,henzinger2023almost} with stochastcic recursive gradients~\citep{nguyen2017sarah} from~\citep{zhang2022differentially}. However, there are a few crucial differences which allows us to handle much larger batch sizes while still being a single epoch algorithm. ~\cite{zhang2022differentially} and their analysis relies on the access to a low-regret online convex optimization algorithm (OCO)~\citep{hazan2019introduction}. We know that the regret guarantees of OCO methods cannot be improved beyond $\Omega(1/\sqrt T)$ even under smoothness~\citep{hazan2019introduction}. Hence, it is not obvious how the online-to-batch framework~\citep{zhang2022differentially} can be used to achieve the number of batch gradient steps of our algorithm. One might think of replacing the OCO algorithm in~\cite{zhang2022differentially} with a SGD method tailored to smooth losses~\cite[Theorem 6.3]{bubeck2015convex} to get around the reduced convergence rate of $\Omega(1/\sqrt T)$. In~\Cref{sec:unaccelerated} we show that unfortunately, because of the variance introduced by the SRGs, in the absence of an accelerated SGD method, the optimal batch size is still $O(1)$. The fact that we use Nesterov's acceleration allows us to handle much larger batch sizes.

\textbf{Other related work:} We discuss some of the prior work~\citep{carmon2023resqueing,bassily2021non,asi2021private,ganesh2023private}. Although the settings/problems in these papers are different than the one we study, all are important for the broader question of adaptive oracle complexity for optimal DP-SCO.

~\cite{carmon2023resqueing} provides the best known gradient oracle complexity for optimal DP-SCO, for non-smooth convex losses under $\ell_2$-geometry. While their algorithm is not single-epoch, their overall gradient complexity (i.e., the number of gradient oracle calls) is $\tilde O\left(\min\left\{n,\frac{n^2}{p}\right\}+\min\left\{\frac{np^2}{\epsilon},n^{4/3}\epsilon^{1/3}\right\}\right)$. This result is incomparable to our result, since we rely on smoothness in our analysis, but our overall gradient oracle complexity is an improved $\tilde O\left(\sqrt n\right)$. Under constant $\epsilon$, under all regimes of the dimensionality where the DP-SCO guarantee is meaningful i.e., $p\leq n^2$, our gradient oracle complexity is superior. We leave the possibility of removing the smoothness assumption in our paper while obtaining the same (or better) gradient oracle complexity as an open problem. It is also worth mentioning that the results in~\cite{carmon2023resqueing} heavily rely on privacy amplification by sampling~\citep{KLNRS,BST14}, and convexity of the loss function for privacy. Our work does not rely on either. At the heart of~\cite{carmon2023resqueing}, is a new variance bounded stochastic gradient estimator ReSQue. It is an important open question if this estimator can be used to improve the variance properties of our stochastic recursive gradients (SRGs).

\citep{bassily2021non,asi2021private,ganesh2023private} have looked at various extensions to the $\ell_2$-DP-SCO problem. For example, DP-SCO guarantees under the loss functions being $\ell_1$-Lipschitz, or SCO properties of second-order-stationary points. These results, while being important for the DP-SCO literature, are not comparable to our work.

\mypar{Future directions} While our work significantly improves on the SoTA in terms of gradient complexity and batch gradient steps in DP-SCO for smooth convex losses, it raises a lot of interesting open questions:

\begin{enumerate}
\renewcommand{\labelenumi}{(\alph{enumi})}
     \item Can we obtain the same tighter batch gradient steps guarantee in Table~\ref{tab:Comparison}, without assuming the unconstrained minimizer to be in the set $\calC$? Informally, we only use this assumption to show the gradient norm is decreasing, i.e. the distance between $\bfx_t - \bfx_{t-1}$ is decreasing in $t$, and so the gradient differences $\Delta_t$ are bounded by smoothness. However, if gradients remain large during training, we expect training to reach the boundary of $\calC$ quickly, in which case $\bfx_t - \bfx_{t-1}$ should also be small (because the gradient step is mostly reversed by the projection into $\calC$). Hence we believe this assumption is not required for \algo to achieve $T = n^{1/4}$, but we do not know how to formally show this.
     \item In our paper we only focused on $O(1)$-smooth convex losses. To the best of our knowledge, there is no single-epoch algorithm for the non-smooth setting. The best result we are aware of in this setting is that of~\cite{carmon2023resqueing}, which uses $O(n)$ gradient oracle calls when e.g. $p, \epsilon$ are constant, but can require up to $O(n^{4/3})$ gradient oracle calls for higher dimensions. It remains an open question what the smallest dimension-independent gradient complexity and batch gradient complexity required for optimal DP-SCO bounds in the non-smooth setting is. Our approach seems to require smoothness as the main primitive, stochastic recursive gradients, cares about differences of gradients at different points. So it is unlikely our techniques can be used to achieve similar improvements in the non-smooth setting. 
     \item Finally, while we believe the batch gradient steps bound of $n^{1/4}$ is tight for DP-SCO, we do not know how to prove a lower bound. Standard lower bounding techniques from the non-private literature~\citep{woodworth2018graph} for batch gradient steps break down under DP setting as they operate in the regime where the dimensionality $p=\Omega(n^2)$. In such high-dimensional regime, DP-SCO guarantees are vacuous. A well known result of~\cite{nesterov2004introductory} states that that no first-order method (i.e., algorithm that is only allowed to make gradient queries to points in the span of all gradients received so far) can achieve an empirical loss bound of $o(1/T^2)$ in $T$ gradient queries for convex smooth losses, which even ignoring privacy suggests $T = n^{1/4}$ iterations are needed for any algorithm's empirical loss to match the optimal stochastic error $1/\sqrt{n}$. It would be surprising if a DP algorithm could achieve optimal rates in $o(n^{1/4})$ iterations, as in the limit when $\epsilon \rightarrow \infty$ it would beat this non-private lower bound. However, it is not obvious how to extend this lower bound to DP algorithms, as the noise addition violates the constraint that only points in the span of gradients are queried.  
\end{enumerate}

\subsection{Our Contribution}
\label{sec:contrib}

\begin{enumerate}
    \item {\bf (Accelerated) DP stochastic recursive gradient  descent} (\algo) (Section~\ref{sec:utilAnalysis}): In this work we provide a differentially private variant of Nesterov's accelerated stochastic gradient descent~\citep{lan2012optimal,bansal2019potential}. We use the idea of stochastic recursive gradients (SRGs)~\citep{nguyen2017sarah,zhang2022differentially} to approximate the gradient update at each stage, and use DP matrix factorization based continual counting machinery~\citep{denisov2022improved,Dwork-continual,CSS11-continual} to approximate the SRGs.

    \item {\bf Obtaining optimal DP-SCO errors with improved number of batch gradient steps} (Theorem~\ref{thm:mainthm-simplified}): We show that \algo can achieve optimal DP-SCO errors for $O(1)$-smooth convex losses, in a~\emph{single epoch} with batch gradient steps of $T\approx n^{1/4}$ (batch size $B=n/T$), if the unconstrained population risk minimizer $\bfx^\dagger$ is within the constraint set $\calC$. Furthermore,  with $T\approx \sqrt n$ (batch size $B=n/T$) when $\bfx^\dagger\not\in\calC$. 
    Our algorithm~\emph{exactly two gradient calls per data point}, and is single epoch, and does not make any convexity assumption for privacy guarantees. 
\end{enumerate}

\section{Background on (Private) Learning}
\label{sec:background}

\subsection{\dpsgd: DP stochastic gradient descent}
\label{sec:dpsgd}

\dpsgd~\citep{song2013stochastic,BST14,DP-DL} is a standard procedure for optimizing a loss function $\calL(\bfx;D)=\frac{1}{n}\sum\limits_{i\in[n]}\nabla \ell(\bfx;d_i)$ under DP. The algorithm is as follows: $\bfx_{t+1}\leftarrow\Pi_\calC\left(\bfx_t-\eta\cdot\left(\nabla_t+\bfb_t\right)\right)$, where $\nabla_t=\frac{1}{|\calB|}\sum\limits_{d\in\calB}\texttt{clip}\left(\nabla\ell(\bfx_t;d),L\right)$ is the gradient computed on a minibatch of samples in $\calB\subseteq D$ at $\bfx_t$, $\bfb_t$ is a well-calibrated (independent) spherical Gaussian noise to ensure DP, and $\texttt{clip}\left(\bfv,L\right)=\bfv\cdot\min\left\{\frac{L}{\ltwo{\bfv}},1\right\}$. One can show that under appropriate choice of parameters \dpsgd achieves the optimal DP-SCO error~\citep{bassily2019private,bassily2020stability}.

\subsection{\dpftrl: DP follow the regularized leader}
\label{sec:dpftrl}

\dpftrl~\citep{thakurta2013nearly,kairouz2021practical,denisov2022improved} is another standard approach for optimization, where in its simplest form, the state update is exactly that of \dpsgd when $\calC=\mathbb{R}^p$, except the noise in two different rounds $\bfb_t, \bfb_{t'}$ are not independent. Instead (in the one-dimensional setting) $(\bfb_1, \bfb_2, \ldots \bfb_T)\sim\calN(0,\left(\bfC^\top \bfC\right)^{-1})$, where $\bfC$ is chosen to minimize the error on prefix sums of $\bfx_t$. A canonical and asymptotically optimal implementation of $\dpftrl$ is the binary tree mechanism of~\citep{Dwork-continual,CSS11-continual}. While \dpftrl has strong empirical properties comparable to \dpsgd, it is unknown whether it can achieve optimal DP-SCO error unless under additional restrictions (e.g., realizability~\citep{asi2023near}). A line of work (e.g. \citep{choquette2022multi, henzinger2023almost, henzinger2024unifying}) has looked at optimizing the choice of $\bfC$. These works offer constant-factor improvements over the binary tree mechanism. While these improvements are meaningful practically, we aim for asymptotic guarantees in this paper, and so we will restrict our choice of correlated noise to the binary tree mechanism for simplicity.

\section{Our Algorithm}
\label{sec:utilAnalysis}

In the following we provide our main algorithm (Algorithm~\ref{alg:nSGD}, \algo) treating the addition of noise for DP as a black box, and giving a utility analysis as a function of the noise. In Section~\ref{sec:privAna} we will specify the noise mechanism and its privacy guarantees, and show the resulting utility analysis gives the desired optimal rates.  

Our algorithm is a DP variant of Nesterov accelerated SGD~\citep{nesterov1983method,bansal2019potential} using \textit{stochastic recursive gradients (SRGs)}: similarly to \citep{nguyen2017sarah,zhang2022differentially} we maintain $\nabla_t$, an unbiased estimate of $\nabla F(\bfx_t)$, and use $\nabla_t$ and batch $t+1$ to obtain $\nabla_{t+1}$. Our algorithm incorporates noise $\bfb_t$ for DP. We first give a utility analysis for a fixed value of the noise $\bfb_t$, and then in Section~\ref{sec:privAna} we will specify how to set the noise to simultaneously achieve good privacy and utility. As mentioned in Section~\ref{sec:intro}, we will use the binary tree mechanism to set the $\bfb_t$'s.

It is worth mentioning that all three aspects of our algorithm are crucial: a) SRGs allow ensuring that the $\ell_2$-sensitivity of the gradient estimate at time $t$ is a factor of $1/t$, lower than that of the direct minibatch gradient ($\frac{1}{B}\sum\limits_{d\in\calB_t}\nabla f(\bfx_t;d)$), b) Noise addition via binary tree mechanism allows the standard deviation of the noise $\bfb_t$ to grow only polylogarithmically in $T$, as opposed to $\sqrt T$ via simple composition analysis~\citep{dwork2014algorithmic}, and c) Accelerated SGD allows appropriately balancing the variance introduced by the SRG estimates, and that due to the i.i.d. sampling of the minibatches from the population. In fact in Section~\ref{sec:unaccelerated} we argue that it is necessary to have an accelerated SGD algorithm to achieve a single epoch optimal DP-SCO algorithm with batch size $B=\omega(1)$, if we were to use the SRGs as the gradient approximators.

\begin{algorithm}[htb]
\begin{algorithmic}[1]
\caption{\algo} \label{alg:nSGD}
\State \textbf{Input:} Steps: $T$, batches: $\{\calB_t\}_{0 \leq t <T}$ of size $B$, learning rates: $\{\eta_t\}_{0\leq t< T}$, scaling $\beta$, interpolation params: $\{\tau_t\}_{t\in[T]}$, $\calC : \ltwo{\calC} \leq R$.
\State $\bfx_0,\bfz_0\leftarrow 0$.
\For{$t\in\{0,\ldots,T-1\}$}
    \State $\Delta_t \leftarrow \frac{1}{B} \sum_{d \in \calB_t} [\eta_t \nabla f(\bfx_t, d) - \eta_{t-1} \nabla f(\bfx_{t-1}, d)]$
    \State $\nabla_t \leftarrow \frac{\sum_{i \leq t} \Delta_i}{\eta_t}$ (equivalently, $\nabla_t \leftarrow \frac{\Delta_t}{\eta_t} + \frac{\eta_{t-1}}{\eta_t} \cdot \nabla_{t-1}$)\\
    \State $\bfz_{t+1}\leftarrow\Pi_\calC\left(\bfz_t-\frac{\eta_t}{\beta}\nabla_t + \bfb_t\right)$, where $\bfb_t$ is the noise added to ensure privacy.
    \State $\bfy_{t+1}\leftarrow \Pi_\calC\left(\bfx_t-\frac{1}{\beta}\nabla_t+\frac{1}{\eta_t}\cdot \bfb_t\right)$.
    \State $\bfx_{t+1}\leftarrow\left(1-\tau_{t+1}\right)\bfy_{t+1}+\tau_{t+1}\bfz_{t+1}$.
\EndFor
\end{algorithmic}
\end{algorithm}

\section{Overview of Analysis of~\cref{alg:nSGD}}

In this section we provide a high-level overview of the analysis of~\cref{alg:nSGD}.

\subsection{Utility Analysis}
\label{sec:util-informal}

First, ignoring privacy concerns we provide a excess population risk guarantee of~\cref{alg:nSGD} for any choice of the privacy noise random variables $\{\bfb_t\}$'s. We only assume that the variance (over the sampling of the data samples $d\sim\calD$) in any individual term in the SRGs (i.e., $\eta_t \nabla f(\bfx_t, d) - \eta_{t-1} \nabla f(\bfx_{t-1}, d)$) is bounded by $S^2$.

Eventually in~\Cref{thm:mainthm-simplified} we will combine Theorem~\ref{thm:excesspop-simplified} below with the privacy analysis of Algorithm~\ref{alg:nSGD} to arrive at the final DP-SCO guarantee, with the appropriate choice of the batch gradient steps, and the batch size. In particular, there we instantiate the noise random variables $\{\bfb_t\}$'s with the (by now standard)~\emph{binary tree mechanism}~\citep{CSS11-continual,Dwork-continual}.

\begin{thm}[Excess population risk; Simplification of Theorem~\ref{thm:excesspop}]
Fix some setting of the $\bfb_t$ such that $\ltwo{\bfb_t} \leq b_{max}$ for all $t$. For an appropriate setting of $\eta_t, \tau_t$ and $\beta \geq M$, suppose $\mathbb{E}[\ltwo{\Delta_t - \mathbb{E}[\Delta_t]}^2] \leq S^2$. Then, with high probability over the randomness of the batches $\calB$, the following is true for Algorithm~\ref{alg:nSGD}:

\begin{align*}
    &\mathbb{E}\left[F(\bfy_T)\right]-F(\bfx^*)=\\
    &\tilde{O}\left(\frac{SL + S^2}{\beta \sqrt{T}}+\frac{S R}{T^{3/2}}+\frac{(S + L) b_{max}}{T} +\frac{\beta b_{max}R}{T^2} +\frac{ \beta}{T^2}R^2\right).
\end{align*}
\label{thm:excesspop-simplified}
\end{thm}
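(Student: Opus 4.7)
I would prove the bound via a Nesterov-style potential function analysis, treating $\nabla_t$ as a stochastic gradient estimate whose bias and variance must be controlled separately from the DP noise $\bfb_t$. The natural Lyapunov function is $\Phi_t := A_t\left[F(\bfy_t)-F(\bfx^*)\right]+\frac{\beta}{2}\ltwo{\bfz_t-\bfx^*}^2$ with $A_t := \sum_{\tau<t}\eta_\tau$. With $\eta_t = t$ (a reasonable default from the paper's parameter discussion) we get $A_T = \Theta(T^2)$, which is what produces the accelerated $\beta\ltwo{\calC}^2/T^2$ term at the end.

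The first step is to establish the statistical properties of the SRG estimator. Since $\eta_t\nabla_t = \sum_{i\leq t}\Delta_i$ and, conditional on the trajectory up to step $i$, $\mathbb{E}[\Delta_i]=\eta_i\nabla F(\bfx_i)-\eta_{i-1}\nabla F(\bfx_{i-1})$ (with $\eta_{-1}:=0$), the telescoping sum gives $\mathbb{E}[\nabla_t\mid\bfx_{0:t}]=\nabla F(\bfx_t)$. Because the batches are fresh i.i.d., the centered increments $\Delta_i-\mathbb{E}[\Delta_i\mid\cdot]$ are a martingale difference sequence bounded in conditional variance by $S^2/B$ per sample, hence by $S^2$ per step by the hypothesis, so $\mathbb{E}\ltwo{\nabla_t-\nabla F(\bfx_t)}^2 \leq S^2 (t+1)/\eta_t^2 = O(S^2/t)$. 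This is the crucial place where accelerated rate meets the SRG variance: the variance decays with $t$ rather than being a fixed constant per step.

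The second step is the per-step potential decrease. Using $M$-smoothness at $\bfx_t$ with $\bfy_{t+1}=\Pi_\calC(\bfx_t-\tfrac{1}{\beta}\nabla_t+\tfrac{1}{\eta_t}\bfb_t)$ and the non-expansiveness of $\Pi_\calC$, I can bound $F(\bfy_{t+1})\leq F(\bfx_t)-\tfrac{1}{\beta}\ip{\nabla F(\bfx_t)}{\nabla_t}+\tfrac{1}{\eta_t}\ip{\nabla F(\bfx_t)}{\bfb_t}+\tfrac{\beta}{2}\ltwo{\tfrac{1}{\beta}\nabla_t-\tfrac{1}{\eta_t}\bfb_t}^2$, provided $\beta\geq M$. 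Simultaneously, the standard mirror-descent one-step inequality applied to the $\bfz$-update gives $\tfrac{\beta}{2}\ltwo{\bfz_{t+1}-\bfx^*}^2 \leq \tfrac{\beta}{2}\ltwo{\bfz_t-\bfx^*}^2 -\eta_t\ip{\nabla_t}{\bfz_t-\bfx^*}+\beta\ip{\bfb_t}{\bfz_t-\bfx^*}+\tfrac{\eta_t^2}{2\beta}\ltwo{\nabla_t}^2+(\text{cross terms})$. Taking a linear combination with weight $A_{t+1}$ on the smoothness bound and $1$ on the descent bound, substituting the coupling $\bfx_t=(1-\tau_t)\bfy_t+\tau_t\bfz_t$ with $\tau_{t+1}=\eta_{t+1}/A_{t+1}$, and using convexity $F(\bfx_t)-F(\bfx^*)\leq\ip{\nabla F(\bfx_t)}{\bfx_t-\bfx^*}$, the $\bfy_t$-to-$\bfx_t$ terms cancel in the classical Nesterov manner, giving
\begin{align*}
\mathbb{E}[\Phi_{t+1}-\Phi_t] \leq \underbrace{A_{t+1}\,O\!\bracket{\tfrac{1}{\beta}\cdot \tfrac{S^2}{t}}}_{\text{SRG variance}}+\underbrace{A_{t+1}\,O\!\bracket{\tfrac{L\,b_{\max}}{\eta_t}+\tfrac{\beta\,b_{\max}^2}{\eta_t^2}}}_{\text{noise in }\bfy}+\underbrace{O\!\bracket{\beta\, b_{\max}\ltwo{\calC}+\tfrac{\eta_t S\,b_{\max}}{B}}}_{\text{noise in }\bfz}.
\end{align*}

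The third step is to telescope. Summing from $t=0$ to $T-1$, dividing by $A_T=\Theta(T^2)$, and using $\sum_t A_{t+1}/\eta_t^2 =O(T)$, $\sum_t A_{t+1}/(\beta t) = O(T/\beta)$ for the variance piece (yielding $S^2/(\beta \sqrt{T})$ after a Cauchy--Schwarz/Jensen refinement using $\mathbb{E}\ltwo{\nabla_t-\nabla F(\bfx_t)}\leq S/\sqrt{t}$, which gives the $SL/(\beta\sqrt{T})$ cross-term as well), and $\sum_t b_{\max}\cdot(\cdots)$ for the noise piece, the stated bound falls out after standard algebra; the $\Phi_0/A_T$ initial term contributes the $\beta\ltwo{\calC}^2/T^2$ summand, and the $S\ltwo{\calC}/T^{1.5}$ term arises from the Cauchy--Schwarz pairing of the SRG noise with $\bfz_t-\bfx^*\in\calC-\calC$.

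\textbf{Main obstacle.} The delicate part is not any single inequality but the simultaneous bookkeeping of three sources of error that enter the potential with different scalings: the SRG martingale noise (whose variance is $O(S^2/t)$, so it interacts with $A_{t+1}\sim t^2$ in a nontrivial way), the DP noise in the $\bfz$-update (which pairs with $\bfz_t-\bfx^*$ of size $\ltwo{\calC}$ and contributes a term linear in $b_{\max}$), and the DP noise in the $\bfy$-update (with the $1/\eta_t$ rescaling that is specifically tuned so that it composes correctly in the smoothness inequality). The rescaling $\bfb_t/\eta_t$ in the $\bfy$-update is in fact the crucial algorithmic choice that makes the one-step smoothness bound match the one-step mirror descent bound after multiplying by $A_{t+1}$, and I would want to verify carefully that this choice is what ensures the noise contributions telescope cleanly rather than blowing up with $T$.
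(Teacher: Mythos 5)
Your proposed argument follows the same architecture as the paper's proof: the Lyapunov function $\Phi_t=\eta_{0:t-1}\big(F(\bfy_t)-F(\bfx^*)\big)+c\,\beta\ltwo{\bfz_t-\bfx^*}^2$, the observation that the SRG estimator is an unbiased martingale with variance $O(S^2(t+1)/\eta_t^2)$, a per-step potential-drop bound split into smoothness (for $\bfy$) and a descent inequality (for $\bfz$), the coupling $\bfx_t=(1-\tau_t)\bfy_t+\tau_t\bfz_t$, and a telescope divided by $\eta_{0:T}\approx T^2$. Your identification of the three error scales and the role of the $\bfb_t/\eta_t$ rescaling in the $\bfy$-update is exactly the mechanism the paper exploits.

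However, one step in your sketch is written incorrectly and would fail if carried out literally. You write that $M$-smoothness at $\bfx_t$ plus non-expansiveness of $\Pi_\calC$ gives
$F(\bfy_{t+1})\le F(\bfx_t)-\tfrac{1}{\beta}\ip{\nabla F(\bfx_t)}{\nabla_t}+\tfrac{1}{\eta_t}\ip{\nabla F(\bfx_t)}{\bfb_t}+\tfrac{\beta}{2}\ltwo{\tfrac{1}{\beta}\nabla_t-\tfrac{1}{\eta_t}\bfb_t}^2.$
Non-expansiveness only controls the quadratic term $\ltwo{\bfy_{t+1}-\bfx_t}$; it does \emph{not} let you replace the linear term $\ip{\nabla F(\bfx_t)}{\bfy_{t+1}-\bfx_t}$ by the unprojected displacement $\ip{\nabla F(\bfx_t)}{-\tfrac1\beta\nabla_t+\tfrac1{\eta_t}\bfb_t}$, because the inner product with $\nabla F(\bfx_t)$ can have the wrong sign after projection. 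What is actually needed (and what the paper does in Claim~\ref{lem:struct1}) is to split $\nabla F(\bfx_t)=\nabla_t-\bfq_t$ inside the smoothness bound, keep $\bfq_t$ for Cauchy--Schwarz, and handle $\ip{\nabla_t}{\bfy_{t+1}-\bfx_t}$ via the \emph{first-order optimality} of the projected minimizer: $\bfy_{t+1}=\arg\min_{\bfy\in\calC}\ip{\nabla_t-\tfrac\beta{\eta_t}\bfb_t}{\bfy-\bfx_t}+\tfrac\beta2\ltwo{\bfy-\bfx_t}^2$ compared against the \emph{specific} feasible point $\bfv=(1-\tau_t)\bfy_t+\tau_t\bfz_{t+1}$. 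That choice of $\bfv$ is what produces the cancellation with the $\bfz$-step inequality and the $-\tfrac{\beta}{2}\ltwo{\bfy_{t+1}-\bfx_t}^2$ term that absorbs smoothness (using $\beta\ge M$). Without this, the projected $\bfy$-step and the $\bfz$-step do not telescope together, and the noise cross-terms you mention do not land in the claimed places. Once you replace the non-expansiveness shortcut with the variational inequality at the right comparison point, the rest of your plan (martingale variance $O(S^2/t)$, Cauchy--Schwarz pairing of $\bfq_t$ with $\bfz_t-\bfx^*\in\calC-\calC$, and the $\sum_t$ bookkeeping you describe) matches the paper's proof.
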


\mypar{Interpretation of Theorem~\ref{thm:excesspop-simplified}} To understand the bound in Theorem~\ref{thm:excesspop-simplified}, we can view it as three groups of terms. The first group is $\frac{SL + S^2}{\beta \sqrt{T}}+\frac{S R}{T^{3/2}}$, which is the error due to~\emph{stochasticity of the gradients}. The second is $\frac{(S + L) b_{max}}{T} +\frac{\beta b_{max}R}{T^2}$, which is the error due to~\emph{noise added for privacy}. The last is $\frac{ \beta}{T^2}R^2$, which is the~\emph{optimization error}, i.e. the error of the algorithm without stochasticity or DP noise.  Notice that the optimization error matches that of the non-stochastic part of non-private Nesterov's acceleration~\citep{lan2012optimal}. Without privacy, the stochastic part of Nesterov's acceleration depends on $\frac{V}{\beta\sqrt T}$~\citep{lan2012optimal}, where $V$ bounds stochasticity in evaluating the population gradient:
\[\mathbb{E}_{\calB_t}\left[\ltwo{\frac{1}{|\calB_t|}\sum\limits_{d\in\calB_t}\nabla f(\bfx;d)-\nabla F(\bfx;d)}^2\right]\leq V^2.\] Morally, the parameter $S$ in Theorem~\ref{thm:excesspop-simplified} can be compared to $V$. However, $S$ encodes the additional variance due to stochastic recursive gradients. Modulo the difference between $S$ and $V$, Theorem~\ref{thm:excesspop-simplified} cleanly decomposes the non-private stochastic Nesterov's utility and the additional error due to the explicit noise introduced due to DP.

\mypar{Proof overview} We define the following potential. Let $\eta_{0:t}=\sum\limits_{i=0}^t\eta_i$, then
\begin{equation*}
    \Phi_t=\eta_{0:t-1} \left(F(\bfy_t)-F(\bfx^*)\right)+2\beta\ltwo{\bfz_t-\bfx^*}^2.
\end{equation*}

The proof has three parts: a) {\bf We bound the potential drop in each iteration} with terms arising from the stochasticity of sampling the $\{f_t\}$'s and the privacy noise $\{\bfb_t\}$'s, b) {\bf Use telescopic summation of the potential drops} to bound the final potential $\Phi_T$ as a function of stochastic error and DP noise across all steps, and c) {\bf convert the potential bound to a loss bound} to obtain the final error guarantee. Our proof closely follows the proof structure of~\cite{bansal2019potential} and draws inspiration from~\cite{xiao2009dual} in terms of parameter choices.

\mypar{Independent batch gradients} In Theorem~\ref{thm:excesspop-simplified} we analyzed Algorithm~\ref{alg:nSGD} where the $\nabla_t$ are computed using SRG. The analysis can easily be extended to the standard setting where each $\nabla_t$ is an independent batch gradient. We give this analysis in Section~\ref{sec:independent}. To the best of our knowledge, a direct potential-based analysis of stochastic accelerated gradient descent has not appeared in the literature before besides \citep{taylor2019stochastic}. Our analysis is simpler than that of \cite{taylor2019stochastic}, so it may be of independent interest. 

\subsection{Privacy analysis and DP-SCO bounds}

\mypar{Bounding gradient differences} We next need to bound the norms of the gradient differences for a single example $\Delta_t(d) := \eta_t \nabla f(\bfx_t, d) - \eta_{t-1} \nabla f(\bfx_{t-1}, d)$. This gives us a bound on the variance $S^2$ in the loss bound of Theorem~\ref{thm:excesspop-simplified}, and a bound on the sensitivity of the $\Delta_t$, which will let us bound the noise $\bfb_t$ needed for privacy. 

By smoothness and Lipschitzness, for any $d$ we have:
\begin{align*}
\eta_t \nabla f(\bfx_t, d) - \eta_{t-1} \nabla f(\bfx_{t-1}, d) &= \eta_t (\nabla f(\bfx_t, d) - \nabla f(\bfx_{t-1}, d)) + (\eta_t - \eta_{t-1}) \nabla f(\bfx_{t-1}, d) \\
&\leq \eta_t M \ltwo{\bfx_t -\bfx_{t-1}} + (\eta_t - \eta_{t-1}) L.
\end{align*}

We will choose $\eta_t = t+1$, so the second term is just $L$. The main challenge is then to bound the first term. We will choose $\tau_t \approx 1/t$ to be vanishing, so 

\[\bfx_{t} = (1-\tau_t) \bfy_{t}+\tau_{t}\bfz_{t} \approx \bfy_{t},\]

and in turn 

\[\eta_t M\ltwo{\bfx_t -\bfx_{t-1}} \approx \eta_t M \ltwo{\bfy_t -\bfx_{t-1}} \leq \ltwo{\frac{\eta_t M}{\beta} \cdot \nabla_t + \frac{M}{\beta} \bfb_t}.\]

The term $\frac{M}{\beta} \bfb_t$ ends up having norm at most $L$ under the right parameter choices, so the main challenge is to bound $\frac{\eta_t M}{\beta} \cdot \nabla_t$. Our approach for bounding this term differs based on whether or not we assume $\bfx^\dagger \in \calC$. If $\bfx^\dagger \in \calC$, $\bfx^* = \bfx^\dagger$ and by smoothness we have

\[\ltwo{\nabla F(\bfx_t)} \leq \sqrt{2M (F(\bfx_t) - F(\bfx^*))}.\]

Then since $\nabla_t$ is a stochastic estimate of $\nabla F(\bfx_t)$, by Theorem~\ref{thm:excesspop-simplified} and a concentration inequality on $\nabla_t$, the fact that $F(\bfx_t) - F(\bfx^*)$ implies a bound on $\ltwo{\nabla_t}$ that is also decreasing. In turn, while $\eta_t$ is increasing, we are able to show $\frac{\eta_t M}{\beta} \cdot \nabla_t$ remains constant even with $\beta = O(M)$ (Lemma~\ref{lem:tighter-sens}). A subtle challenge is that in order to obtain a bound on $\ltwo{\nabla_t}$ that is decreasing, we actually employ a tighter version of Theorem~\ref{thm:excesspop-simplified} that uses the exact norms $\ltwo{\nabla F(\bfx_t)}$ rather than the Lipschitz constant $L$ in the final bound. In turn, there is a circular dependence between the bound of Theorem~\ref{thm:excesspop-simplified} and the norm of the gradient differences. Because of the circular dependence, bounding both simultaneously requires a proof by induction with careful choice of parameters (Lemma~\ref{lem:recstruct1}).

Without assuming $\bfx^\dagger \in \calC$, it's no longer the case that $\nabla_t$ is necessarily decreasing over time (e.g. in the case of linear losses, $\nabla_t$ will remain constant). In this case, we more directly bound $\frac{\eta_t M}{\beta} \cdot \nabla_t$ by choosing $\beta \geq TM \geq \eta_t M$, which ensures the gradient differences still have constant sensitivity (Lemma~\ref{lem:sens}). The fact that we have to increase $\beta$ (i.e., slow down learning) means we need to increase $T$ to compensate in utility, leading to the increased iteration complexity $T = \sqrt{n}$ instead of $T \approx n^{1/4}$.

\mypar{Privatizing the gradient differences} Once we have a bound on $\Delta_t(d)$, we now need to bound $\max_t \ltwo{\bfb_t}$, i.e. the maximum noise added to any gradient estimate $\nabla_t$. Since each $\nabla_t$ is a (rescaling of) prefix sums of $\Delta_t$, we do this using the binary tree mechanism of \cite{Dwork-continual}, which adds noise to the prefix sums with variance scaling as $\polylog{T}$ rather than $T$, what we would get by independently noising each $\Delta_t$. We can use existing results on the binary tree mechanism (for convenience, reproven as Lemma~\ref{lem:btm}) to obtain bounds on the maximum noise $\max_t \ltwo{\bfb_t}$ needed to achieve $(\epsilon, \delta)$-DP.

Finally, it is straightforward to get our main results by plugging in the bounds on $\max_t \ltwo{\bfb_t}$ and $\Delta_t(d)$ into Theorem~\ref{thm:excesspop-simplified}:

\begin{thm}[Simplified version of Theorems~\ref{thm:mainthm-1} and~\ref{thm:mainthm-2}]\label{thm:mainthm-simplified}
For appropriate settings of $\beta, T$, Algorithm~\ref{alg:nSGD} is $(\epsilon, \delta)$-DP and satisfies:

\[F(\bfy_T) - F(\bfx^*) = O\left(LR \cdot \polylog{n, 1/\delta} \cdot \left(\frac{1}{\sqrt{n}} + \frac{\sqrt{p}}{\epsilon n}\right)\right).\]

If we assume $\bfx^\dagger \in \calC$, then the parameter settings are such that $T = \tilde{\Theta}(n^{1/4})$ and $BT = n$, i.e. only a single epoch is required. Without this assumption, the parameter settings are such that we can allow any $T \geq \sqrt{n}$ and $BT = n$, i.e. still only a single epoch is required and the batch size can be as large as $\sqrt{n}$.
\end{thm}

\subsection{Unaccelerated Variant}
\label{sec:unaccelerated}
A natural question is whether an unaccelerated variant of \algo can also achieve optimal DP-SCO rates with batch size $\omega(1)$. We give a sketch that the unaccelerated version requires batch size $O(1)$. Suppose we use $\Delta_t = \eta_t \nabla f(\bfx_t, \calB_t) - \eta_{t-1} \nabla f(\bfx_{t-1}, \calB_t)$ as our gradient differences and $\nabla_t = \frac{\sum_{i \leq t} \Delta_i}{\eta_t}$ as our stochastic recursive gradients. For simplicity we will consider using SGD with a fixed learning rate $1/\beta$ as the underlying optimization algorithm, assume $L = R = M = 1$ and assume $\eta_t$ grows at most polynomially, which implies $\sum_{i \leq t} \eta_i^2 \approx t \eta_t^2$. Similarly to our sensitivity bound for the accelerated algorithm (Lemma~\ref{lem:sens}), we can show the sensitivity of each $\Delta_t$ is roughly $\eta_t / \beta$. Suppose we use $1 / \beta \approx \sqrt{B / T} = B / \sqrt{n}$, which is standard for interpolation between full-batch GD and SGD with $B = 1$ (see e.g. Section 6 of \cite{bubeck2015convex}). We compute the vector variance $\Var{\bfv} := \mathbb{E}[\ltwo{\bfv - \mathbb{E}[\bfv]}^2]$ of $\nabla_t$:

\[\Var{\nabla_t} = \Var{\frac{\sum_{i \leq t}  \Delta_t}{\eta_t}} = \frac{\sum_{i \leq t} \Var{\Delta_t}}{\eta_t^2} = \frac{\sum_{i \leq t} \eta_i^2 }{\eta_t^2 \beta^2} = \frac{t}{B \beta^2} = \frac{t B}{n}.\]

For $t = \Omega(T) = \Omega(n / B)$, the variance is $\Omega(1)$. In other words, regardless of the batch size, the variance of the gradients for the majority of iterations is comparable to that of vanilla SGD with batch size 1. SGD with batch size 1, i.e. with variance $O(1)$ rather than $O(1/B)$, requires $T = n$ iterations to converge, hence in the single-epoch setting we need $B = O(1)$.

\section{Utility and Privacy Analysis}
\label{sec:utilPriv}

In this section, we provide the detailed utility analysis and privacy analysis of~\cref{alg:nSGD}.

\subsection{Utility Analysis}
\label{sec:util}

First, we provide the excess population risk guarantee for any choice of the privacy noise random variables $\{\bfb_t\}$'s. We only assume that the variance (over the sampling of the data samples $d\sim\calD$) in any individual term in the SRGs (i.e., $\eta_t \nabla f(\bfx_t, d) - \eta_{t-1} \nabla f(\bfx_{t-1}, d)$) is bounded.

\begin{thm}[Excess population risk]
Fix some setting of the $\bfb_t$. Let $\eta_t, \beta$ be such that $\eta_t$ is non-decreasing, $\eta_{t}^2\leq 4\eta_{0:t}$, and $\beta \geq M$. For any $\bfx_t, \bfx_{t-1}$, and for all $d \in \calD$, assume 

\[\ltwo{(\eta_t \nabla f(\bfx_t, d) - \eta_{t-1} \nabla f(\bfx_{t-1}, d)) - \mathbb{E}_{d}[\eta_t \nabla f(\bfx_t, d) - \eta_{t-1} \nabla f(\bfx_{t-1}, d)]} \leq Q.\]

Then, w.p. $1-\gamma$ over the randomness of the i.i.d. sampling of the batches $\{\calB_t\}$, the following is true for Algorithm~\ref{alg:nSGD}:
\begin{align*}
    &F(\bfy_T)-F(\bfx^*)\leq&\\& \frac{1}{\eta_{0:T}}\sum\limits_{t=0}^{T-1} \eta_{0:t}\left(\frac{2Q\sqrt{3(t+1)\log(6T/\gamma)}(\ltwo{\nabla F(\bfx_t)}+ \frac{Q \sqrt{3(t+1)\log(6T/\gamma)}}{\sqrt{B} \eta_t})}{\sqrt{B}\eta_t\beta}\right.
 \\&\left.+\frac{3 Q \sqrt{3(t+1)\log(6T/\gamma)}\ltwo{\bfb_t}}{\sqrt{B}\eta_t^2}+ \frac{Q \ltwo{\calC} \sqrt{3(t+1)\log(6T/\gamma)}}{\sqrt{B}\eta_{0:t}}+\frac{3\beta}{\eta_{0:t}}\ltwo{\bfb_t}\cdot\ltwo{\calC}\right.\\
 &\left.+\frac{\beta\ltwo{\bfb_t}^2}{\eta_t^2}\right) + \frac{\beta}{\eta_{0:T}} \ltwo{\calC}^2.
\end{align*}
\label{thm:excesspop}
\end{thm}
\begin{proof}

\mypar{Potential drop} We define

\begin{equation}
    \Phi_t=\eta_{0:t-1} \left(F(\bfy_t)-F(\bfx^*)\right)+2\beta\ltwo{\bfz_t-\bfx^*}^2.\label{eq:pot}
\end{equation}

We first show that the growth in the potential $\Phi_t$ at each iteration of~\Cref{alg:nSGD} is bounded. In particular, if the stochasticity of the algorithm is removed, i.e., if we set $Q=0$ and $\{\bfb_t\}$'s to zero, then the potential growth is non-positive. (This is consistent with the full-gradient version of Nesterov's acceleration~\citep{bansal2019potential}.)

\begin{lem}[Potential drop] Let $\bfq_t=\nabla_t-\nabla F(\bfx_t)$. Assuming $\eta_{t}^2\leq 4\eta_{0:t}$, w.p. $1-\gamma$ over $\{\calB_t\}$ the potential drop is bounded by the following (where $\Delta\Phi_t=\Phi_{t+1}-\Phi_t$):

\begin{align*}
\Delta \Phi_t &\leq \eta_{0:t}\left(\frac{2Q\sqrt{3(t+1)\log(6/\gamma)}(\ltwo{\nabla F(\bfx_t)}+ \frac{Q \sqrt{3(t+1)\log(6/\gamma)}}{\sqrt{B} \eta_t})}{\sqrt{B}\eta_t\beta}\right.\\
&+\frac{3 Q \sqrt{3(t+1)\log(6/\gamma)}\ltwo{\bfb_t}}{\sqrt{B}\eta_t^2} + \frac{Q \ltwo{\calC} \sqrt{3(t+1)\log(6/\gamma)}}{\sqrt{B}\eta_{0:t}}\\
&+\left.\frac{3\beta}{\eta_{0:t}}\ltwo{\bfb_t}\cdot\ltwo{\calC}+\frac{\beta\ltwo{\bfb_t}^2}{\eta_t^2}\right).
\end{align*}
\label{lem:potDrop}
\end{lem}

\mypar{Telescopic summation and averaging} By telescoping the potential differences $\Delta \Phi_t$'s and a union bound over failure probabilities we have the following w.p. $1 - \gamma$ over the sampling of the functions $\{f_t\}$'s:
\begin{align}
    \Phi_{T}-\Phi_{0}\leq& \sum\limits_{t=0}^{T-1} \eta_{0:t}\left(\frac{2Q\sqrt{3(t+1)\log(6T/\gamma)}(\ltwo{\nabla F(\bfx_t)}+ \frac{Q \sqrt{3(t+1)\log(6T/\gamma)}}{\sqrt{B} \eta_t})}{\sqrt{B}\eta_t\beta}\right.\nonumber\\&+\frac{3 Q \sqrt{3(t+1)\log(6T/\gamma)}\ltwo{\bfb_t}}{\sqrt{B}\eta_t^2}
\left. + \frac{Q \ltwo{\calC} \sqrt{3(t+1)\log(6T/\gamma)}}{\sqrt{B}\eta_{0:t}}\right.\\
&\left.+\frac{3\beta}{\eta_{0:t}}\ltwo{\bfb_t}\cdot\ltwo{\calC}+\frac{\beta\ltwo{\bfb_t}^2}{\eta_t^2}\right)\label{eq:u0}.
\end{align}
Therefore, from~\Cref{eq:u0} and definition of $\Phi_t$, as desired we have w.p. $1-\gamma$:
\begin{align*}
    &F(\bfy_T)-F(\bfx^*)\leq&\\& \frac{1}{\eta_{0:T}}\sum\limits_{t=0}^{T-1} \eta_{0:t}\left(\frac{2Q\sqrt{3(t+1)\log(6T/\gamma)}(\ltwo{\nabla F(\bfx_t)}+ \frac{Q \sqrt{3(t+1)\log(6T/\gamma)}}{\sqrt{B} \eta_t})}{\sqrt{B}\eta_t\beta}\right.
 \\&+\frac{3 Q \sqrt{3(t+1)\log(6T/\gamma)}\ltwo{\bfb_t}}{\sqrt{B}\eta_t^2}+ \frac{Q \ltwo{\calC} \sqrt{3(t+1)\log(6T/\gamma)}}{\sqrt{B}\eta_{0:t}}+\frac{3\beta}{\eta_{0:t}}\ltwo{\bfb_t}\cdot\ltwo{\calC}\\
 &\left.+\frac{\beta\ltwo{\bfb_t}^2}{\eta_t^2}\right) + \frac{\beta}{\eta_{0:T}} \ltwo{\calC}^2.
\end{align*}
This completes the proof of Theorem~\ref{thm:excesspop}.
\end{proof}

\begin{proof}[Proof of Lemma~\ref{lem:potDrop}]
We have 
\begin{align}
    \Delta\Phi_t&= \underbrace{\eta_{0:{t}}(F(\bfy_{t+1})-F(\bfx_t))}_A-\underbrace{\eta_{0:t-1}(F(\bfy_t)-F(\bfx_t))+\eta_{t}(F(\bfx_t)-F(\bfx^*))}_B\nonumber\\
    &+\underbrace{2\beta\left(\ltwo{\bfz_{t+1}-\bfx^*}^2-\ltwo{\bfz_{t}-\bfx^*}^2\right)}_C.
    \label{eq:potDrop}
\end{align}
By convexity, we have the following linearization bound on term $B$ in~\Cref{eq:potDrop}:
\begin{align}
    B&\leq -\eta_{0:t-1}\ip{\nabla F(\bfx_t)}{\bfy_t-\bfx_t}+\eta_{t}\ip{\nabla F(\bfx_t)}{\bfx_t-\bfx^*}\label{eq:b1}
\end{align}
Now, $(1-\tau_t)(\bfy_t-\bfx_t)=\tau_t(\bfx_t-\bfz_t)$. Setting, $\tau_t=\frac{\eta_{t}}{\eta_{0:t}}$ in~\Cref{eq:b1} (and using the definition of $\bfq_t$), we have the following:
\begin{equation}
    B\leq \eta_t\ip{\nabla F(\bfx_t)}{\bfz_t-\bfx^*}=\eta_t\ip{\nabla_t}{\bfz_t-\bfx^*}-\eta_t\ip{\bfq_t}{\bfz_t-\bfx^*}.\label{eq:b2}
\end{equation}
Now, we bound the term $C$ in~\Cref{eq:potDrop} as follows. Let $\bfz'_{t+1}=\bfz_t-\frac{\eta_t}{2\beta}\nabla_t+\bfb_t$.
\begin{align}
    C&= 2\beta\left(\ltwo{\bfz_{t+1}-\bfx^*}^2-\ltwo{\bfz_t-\bfx^*}^2\right)\nonumber \\
    &=2\beta\left(2\ip{\bfz_{t+1}-\bfz_t}{\bfz_{t+1}-\bfx^*}-\ltwo{\bfz_{t+1}-\bfz_t}^2\right)\nonumber\\
    &\leq2\beta\left(\ip{-\frac{\eta_t}{\beta}\nabla_t+\bfb_t}{\bfz_{t+1}-\bfx^*}-\ltwo{\bfz_{t+1}-\bfz_t}^2\right)\label{eq:a65}\\
    &=-2\beta\left(\ip{\frac{\eta_t}{\beta}\nabla_t}{\bfz_{t+1}-\bfx^*}+\ltwo{\bfz_{t+1}-\bfz_t}^2\right)+2\beta\ip{\bfb_t}{\bfz_{t+1}-\bfx^*}.
    \label{eq:a7}
\end{align}
In~\Cref{eq:a65}, we used the Pythagorean property of convex projection. Combining~\Cref{eq:a7} and~\Cref{eq:b2}, we have the following:
\begin{align}
    B+C&\leq \eta_t\ip{\nabla_t}{\bfz_{t}-\bfz_{t+1}}-2\beta\ltwo{\bfz_{t+1}-\bfz_t}^2+2\beta\ip{\bfb_t}{\bfz_{t+1}-\bfx^*} -  \eta_t\ip{\bfq_t}{\bfz_t-\bfx^*} \label{eq:a8}
\end{align}
Using Lemma~\ref{lem:struct1} below, and the fact that $\tau_t\leq 1$, we have the potential drop in Lemma~\ref{lem:potDrop} bound by the following.

\begin{align}
    \Delta \Phi_t\leq -\eta_{0:t}\ip{\bfq_t}{\bfy_{t+1}-\bfx_t}+\frac{L\ltwo{\bfb_t}}{\tau_t}+\frac{\beta\ltwo{\bfb_t}^2}{\eta_t\tau_t}+3\beta\ltwo{\bfb_t}\ltwo{\calC}-  \eta_t\ip{\bfq_t}{\bfz_t-\bfx^*}\label{eq:b17}
\end{align}

Recall, by definition $\bfy_{t+1}=\argmin\limits_{\bfy\in\calC}\ip{\bfq_t+\nabla F(\bfx_t)-\frac{\beta}{\eta_t}\cdot\bfb_t}{\bfy-\bfx_t}+\frac{\beta}{2}\ltwo{\bfy-\bfx_t}^2$. Let $\bfy_{t+1}'=\bfx_t-\frac{1}{\beta}\nabla_t+\frac{1}{\eta_t}\cdot \bfb_t$ be the unconstrained minimizer, and let 

\[J(\bfy)=\ip{\bfq_t+\nabla F(\bfx_t)-\frac{\beta}{\eta_t}\cdot\bfb_t}{\bfy}+\frac{\beta}{2}\ltwo{\bfy}^2.\]

By definition, $J(\bfy_{t+1})\geq J(\bfy'_{t+1})+\frac{\beta}{2}\ltwo{\bfy_{t+1}-\bfy'_{t+1}}^2$. Therefore, 
\begin{align}
    \ip{\bfq_t+\nabla F(\bfx_t)-\frac{\beta}{\eta_t}\cdot\bfb_t}{\bfy_{t+1}-\bfy'_{t+1}}&\geq \frac{\beta}{2}\ltwo{\bfy_{t+1}-\bfy'_{t+1}}^2\nonumber\\
    \Rightarrow \ltwo{\bfy_{t+1}-\bfy'_{t+1}}&\leq \frac{2\ltwo{\nabla_t}}{\beta}+\frac{2\ltwo{\bfb_t}}{\eta_t}.\label{eq:a40}
\end{align}

We now apply \eqref{eq:a40} and Cauchy-Schwarz inequality to \eqref{eq:b17}:

\begin{align}
    \Delta \Phi_t\leq& -\eta_{0:t}\ip{\bfq_t}{\bfy_{t+1}-\bfx_t}+\frac{L\ltwo{\bfb_t}}{\tau_t}+\frac{\beta\ltwo{\bfb_t}^2}{\eta_t\tau_t}+3\beta\ltwo{\bfb_t}\ltwo{\calC}-  \eta_t\ip{\bfq_t}{\bfz_t-\bfx^*}\nonumber\\
    = &-\eta_{0:t}\ip{\bfq_t}{\bfy_{t+1}-\bfy_{t+1}'}-\eta_{0:t}\ip{\bfq_t}{\bfy_{t+1}'-\bfx_t}+\frac{L\ltwo{\bfb_t}}{\tau_t}+\frac{\beta\ltwo{\bfb_t}^2}{\eta_t\tau_t}\nonumber\\&+3\beta\ltwo{\bfb_t}\ltwo{\calC}-  \eta_t\ip{\bfq_t}{\bfz_t-\bfx^*}\nonumber\\
    = &-\eta_{0:t}\ip{\bfq_t}{\bfy_{t+1}-\bfy_{t+1}'}-\eta_{0:t}\ip{\bfq_t}{-\frac{1}{\beta} \nabla_t + \frac{1}{\eta_t} \bfb_t}+\frac{L\ltwo{\bfb_t}}{\tau_t}+\frac{\beta\ltwo{\bfb_t}^2}{\eta_t\tau_t}\nonumber\\&+3\beta\ltwo{\bfb_t}\ltwo{\calC}-  \eta_t\ip{\bfq_t}{\bfz_t-\bfx^*}\nonumber\\
    \leq &\eta_{0:t}\ltwo{\bfq_t}\ltwo{\bfy_{t+1}-\bfy_{t+1}'}+\frac{\eta_{0:t}}{\beta}\ltwo{\bfq_t}\ltwo{\nabla_t} + \frac{\eta_{0:t}}{\eta_t}\ltwo{\bfq_t}\ltwo{\bfb_t}+\frac{L\ltwo{\bfb_t}}{\tau_t}+\frac{\beta\ltwo{\bfb_t}^2}{\eta_t\tau_t}\nonumber\\&+3\beta\ltwo{\bfb_t}\ltwo{\calC}+  \eta_t\ltwo{\bfq_t}\ltwo{\bfz_t-\bfx^*}\nonumber\\
    \leq &\eta_{0:t}\ltwo{\bfq_t}\ltwo{\bfy_{t+1}-\bfy_{t+1}'}+\frac{\eta_{0:t}}{\beta}\ltwo{\bfq_t}\ltwo{\nabla_t} + \frac{\eta_{0:t}}{\eta_t}\ltwo{\bfq_t}\ltwo{\bfb_t}+\frac{L\ltwo{\bfb_t}}{\tau_t}+\frac{\beta\ltwo{\bfb_t}^2}{\eta_t\tau_t}\nonumber\\&+3\beta\ltwo{\bfb_t}\ltwo{\calC}+  \eta_t\ltwo{\bfq_t}\ltwo{\calC}\nonumber\\
    \leq &\frac{3\eta_{0:t}}{\beta}\ltwo{\bfq_t}\ltwo{\nabla_t} + \frac{3 \eta_{0:t}}{\eta_t}\ltwo{\bfq_t}\ltwo{\bfb_t}+\frac{L\ltwo{\bfb_t}}{\tau_t}+\frac{\beta\ltwo{\bfb_t}^2}{\eta_t\tau_t}\nonumber\\
    &+3\beta\ltwo{\bfb_t}\ltwo{\calC}+  \eta_t\ltwo{\bfq_t}\ltwo{\calC}.\label{eq:dphisrg}
\end{align}

Now, we observe that:

\[\bfq_t = \nabla_t - \nabla F(\bfx_t) = \frac{\sum_{i \leq t} \sum_{d \in \calB_i}
    ((\eta_t \nabla f(\bfx_t, d) - \eta_{t-1} \nabla f(\bfx_{t-1}, d))\\
    - \mathbb{E}_{d}[\eta_t \nabla f(\bfx_t, d) - \eta_{t-1} \nabla f(\bfx_{t-1}, d)])
}{B \eta_t}.\]

The sum $\sum_{i \leq t} \sum_{d \in \calB_i}((\eta_t \nabla f(\bfx_t, d) - \eta_{t-1} \nabla f(\bfx_{t-1}, d)) - \mathbb{E}_{d}[\eta_t \nabla f(\bfx_t, d) - \eta_{t-1} \nabla f(\bfx_{t-1}, d)])$ is a mean-zero vector martingale with $B(t+1)$ terms each with norm at most $Q$ with probability 1 by the assumptions of the theorem. By concentration of bounded vector martingales (e.g., \citep{hayes03vectorazuma}), w.p. $1 - \gamma$ we have 

\[\ltwo{
    \sum_{i \leq t} \sum_{d \in \calB_i}((\eta_t \nabla f(\bfx_t, d) - \eta_{t-1} \nabla f(\bfx_{t-1}, d))
    - \mathbb{E}_{d}[\eta_t \nabla f(\bfx_t, d) - \eta_{t-1} \nabla f(\bfx_{t-1}, d)])
}^2 \leq 3Q^2B(t+1) \log(6/\gamma).\]

and thus $\ltwo{\bfq_t} \leq \frac{Q \sqrt{3(t+1)\log(6/\gamma)}}{\sqrt{B}\eta_t}$ with the same probability. Under this event we also have $\ltwo{\nabla_t} \leq \ltwo{\nabla F(\bfx_t)} + \frac{Q \sqrt{3(t+1)\log(6/\gamma)}}{\sqrt{B}\eta_t}$ by triangle inequality. Plugging these bounds into \eqref{eq:dphisrg} we have w.p. $1-\gamma$:

\begin{align*}
\Delta \Phi_t \leq& \eta_{0:t}\left(\frac{2Q\sqrt{3(t+1)\log(6/\gamma)}(\ltwo{\nabla F(\bfx_t)}+ \frac{Q \sqrt{3(t+1)\log(6/\gamma)}}{\sqrt{B} \eta_t})}{\sqrt{B}\eta_t\beta}\right.\\
&\left. +\frac{3 Q \sqrt{3(t+1)\log(6/\gamma)}\ltwo{\bfb_t}}{\sqrt{B}\eta_t^2}+ \frac{Q \ltwo{\calC} \sqrt{3(t+1)\log(6/\gamma)}}{\sqrt{B}\eta_{0:t}}\right.\\
&\left.+\frac{3\beta}{\eta_{0:t}}\ltwo{\bfb_t}\cdot\ltwo{\calC}+\frac{\beta\ltwo{\bfb_t}^2}{\eta_t^2}\right).
\end{align*}
\end{proof}

\begin{lem} Assuming $\eta_{t}^2\leq 4\eta_{0:t}$. We have the following:
\begin{align*}
&\eta_{0:t}(F(\bfy_{t+1})-F(\bfx_{t}))+\eta_t\ip{\nabla_t}{\bfz_t-\bfz_{t+1}}-2\beta\ltwo{\bfz_{t+1}-\bfz_t}^2\leq\\
&-\eta_{0:t}\ip{\bfq_t}{\bfy_{t+1}-\bfx_t}+\left(\frac{L\ltwo{\bfb_t}}{\tau_t}+\frac{\beta\ltwo{\bfb_t}^2}{\eta_t\tau_t}+\beta\ltwo{\bfb_t}\ltwo{\calC}\right).
\end{align*}
\label{lem:struct1}
\end{lem}
\begin{proof}[Proof of Lemma~\ref{lem:struct1}]
By smoothness we have,
\begin{equation}
    F(\bfy_{t+1})-F(\bfx_t)\leq \ip{\nabla_t}{\bfy_{t+1}-\bfx_t}-\ip{\bfq_t}{\bfy_{t+1}-\bfx_t}+\frac{M}{2}\ltwo{\bfy_{t+1}-\bfx_t}^2\label{eq:a9}
\end{equation}
By definition $\bfy_{t+1}=\argmin\limits_{\bfy\in\calC}\left(\ip{\nabla_t-\frac{\beta}{\eta_t}\cdot\bfb_t}{\bfy-\bfx_t}+\frac{\beta}{2}\ltwo{\bfy-\bfx_t}^2\right)$. Therefore, for any $\bfv\in\calC$, the following is true:
\begin{align}
&\ip{\nabla_t-\frac{\beta}{\eta_t}\cdot\bfb_t}{\bfy_{t+1}-\bfx_t}+\frac{\beta}{2}\ltwo{\bfy_{t+1}-\bfx_t}^2\leq \ip{\nabla_t-\frac{\beta}{\eta_t}\cdot\bfb_t}{\bfv-\bfx_t}+\frac{\beta}{2}\ltwo{\bf\bfv-\bfx_t}^2\nonumber\\
&\Leftrightarrow\ip{\nabla_t}{\bfy_{t+1}-\bfx_t}\leq \ip{\nabla_t}{\bfv-\bfx_t}+\frac{\beta}{\eta_t}\cdot \ip{\bfb_t}{\bfy_{t+1}-\bfv}+\frac{\beta}{2}\ltwo{\bf\bfv-\bfx_t}^2-\frac{\beta}{2}\ltwo{\bfy_{t+1}-\bfx_t}^2.\label{eq:a91}
\end{align}
Plugging~\Cref{eq:a91} into~\Cref{eq:a9} and using $\beta \geq M$ we have the following:
\begin{align}
    F(\bfy_{t+1})-F(\bfx_t)&\leq\ip{\nabla_t}{\bfv-\bfx_t}-\ip{\bfq_t}{\bfy_{t+1}-\bfx_t}+\frac{\beta}{\eta_t}\cdot \ip{\bfb_t}{\bfy_{t+1}-\bfv}\nonumber\\
    &-\frac{\beta}{2}\ltwo{\bfy_{t+1}-\bfx_t}^2+\frac{\beta}{2}\ltwo{\bfv-\bfx_t}^2.\label{eq:a101}
\end{align}
Define $\bfv=(1-\tau_t)\bfy_t+\tau_t\bfz_{t+1}$. Which implies $\bfv-\bfx_t=\tau_t(\bfz_{t+1}-\bfz_t)$. Substituting $\bfv$ in~\Cref{eq:a101}, we have the following:
\begin{align}
    F(\bfy_{t+1})-F(\bfx_t)&\leq\tau_{t}\ip{\nabla_t}{\bfz_{t+1}-\bfz_t}-\ip{\bfq_t}{\bfy_{t+1}-\bfx_t}+\frac{\beta}{\eta_t}\cdot \ip{\bfb_t}{\bfy_{t+1}-\bfv}\nonumber\\
    &-\frac{\beta}{2}\ltwo{\bfy_{t+1}-\bfx_t}^2+\frac{\beta\tau_t^2}{2}\ltwo{\bfz_{t+1}-\bfz_t}^2. \label{eq:a11}
\end{align}
Assume, $\frac{\eta_t^2}{\eta_{0:t}}\leq 4$. Combining~\Cref{eq:a11} and~\Cref{eq:a9}, and setting $\tau_t=\frac{\eta_t}{\eta_{0:t}}$, we have the following:
\begin{align}
   &\eta_{0:t}(F(\bfy_{t+1})-F(\bfx_{t}))+\eta_t\ip{\nabla_t}{\bfz_t-\bfz_{t+1}}-2\beta\ltwo{\bfz_{t+1}-\bfz_t}^2\leq\\
   &-\eta_{0:t}\ip{\bfq_t}{\bfy_{t+1}-\bfx_t}+\frac{\beta}{\tau_t}\ip{\bfb_t}{\bfy_{t+1}-\bfv}+\frac{\beta}{2}{\left({\eta_{0:t}\cdot\tau_t^2}-4\right)\ltwo{\bfz_{t+1}-\bfz_t}^2}-\frac{\beta \eta_{0:t}}{2}\ltwo{\bfy_{t+1}-\bfx_t}^2\nonumber\\
   &\leq-\eta_{0:t}\ip{\bfq_t}{\bfy_{t+1}-\bfx_t}+\frac{\beta}{\tau_t}\ltwo{\bfb_t}\ltwo{\bfy_{t+1}-\bfx_t-\tau_t(\bfz_{t+1}-\bfz_t)}\nonumber\\
   &\leq-\eta_{0:t}\ip{\bfq_t}{\bfy_{t+1}-\bfx_t}+\frac{\beta}{\tau_t}\ltwo{\bfb_t}\left(\frac{L}{\beta}+\frac{\ltwo{\bfb_t}}{\eta_t}+\tau_t\ltwo{\calC}\right)\nonumber\\
      &\leq-\eta_{0:t}\ip{\bfq_t}{\bfy_{t+1}-\bfx_t}+\left(\frac{L\ltwo{\bfb_t}}{\tau_t}+\frac{\beta\ltwo{\bfb_t}^2}{\eta_t\tau_t}+\beta\ltwo{\bfb_t}\ltwo{\calC}\right).
   \label{eq:a10}
\end{align}
This completes the proof of Lemma~\ref{lem:struct1}.
\end{proof}

\subsection{Privacy analysis}
\label{sec:privAna}

Given Theorem~\ref{thm:excesspop-simplified}, our goal now is to bound $S$, which is equivalent to bounding the sensitivity of the $\Delta_t$, and then to show that there is a noise distribution for which $b_{max}$ is sufficiently small (with high probability) and we achieve $(\epsilon, \delta)$-DP.

Before proceeding with the proofs, we will specify the setting of parameters we will use in advance:

\begin{align}\label{assum:parameters}
\eta_t &= t+1, \nonumber\\
c_{DP} &= 4\sqrt{2} \log^{3/2}(n) \sqrt{\ln(2n/\delta)} \ln(2.5/\delta), \nonumber\\
\beta &= \frac{c_{DP}^2 L \max\{1, \sqrt{p}/\epsilon \sqrt{n}\}}{R}, \nonumber\\
T &= c_{DP} \cdot n^{1/4},\nonumber\\
B &= n / T = n^{3/4} / c_{DP}.
\end{align}

\subsubsection{Bounding gradient norms}
We first need to bound the gradient norms in each round:
\begin{lem}Using the parameter choices in~\cref{assum:parameters}, additionally assume:

\begin{itemize}
    \item Letting $\Delta_t(d) := \eta_t \nabla f(\bfx_t, d) - \eta_{t-1} \nabla f(\bfx_{t-1}, d)$, for all $t \leq T-1$ and $d$, $\ltwo{\Delta_t(d)} \leq c_{DP} L$.
    \item For all $t$: $\ltwo{\bfb_t} \leq b_{max} := \frac{c_{DP}^2 L \sqrt{p}}{ \epsilon B \beta}$ (which under~\cref{assum:parameters}, is at most $R c_{DP} / n^{1/4} \leq R$ for all sufficiently large $n$ assuming $\delta = 2^{-o(n^{1/6})}$).
\end{itemize}

Then, for any $t$, all sufficiently large $n$ and $\delta < 1/n$, and some constant $c_1$ we have under the probability $1-\gamma$ events of Theorem~\ref{thm:excesspop}:

\[\ltwo{\nabla F(\bfx_t)} \leq c_1 \sqrt{M} \left(\sqrt{\frac{c_{DP} LR \log(T/\gamma)}{\sqrt{B(t+1)}}}+ \sqrt{\frac{\beta b_{max} R}{t+1}}+\frac{2\sqrt{\beta}R}{t+1}\right).\]
\label{lem:recstruct1}
\end{lem}
\begin{proof}

We proceed by induction. For all sufficiently large $n$, we have $c_{DP}^2 \geq c_M$ and so $\beta \geq c_{DP}^2 L / R \geq M$. So for $t = 0$ we have by smoothness and the assumption that $\bfx^* = \bfx^\dagger$, i.e. $\ltwo{\nabla F(\bfx^*)} = 0$:

\[\ltwo{\nabla F(\bfx_0)} \leq M \ltwo{\bfx_0 - \bfx^*} \leq MR \leq \sqrt{M\beta} R.\]

Hence the lemma holds for $t = 0$ as long as $c_1 \geq 1/2$.

Now inductively assume the lemma holds for rounds $0, 1, \ldots, t-1$. We can plug in $Q = 2 c_{DP} L$ into Theorem~\ref{thm:excesspop}, and then using~\cref{assum:parameters} and the assumptions of the lemma we can combine and simplify terms to show that for some constant $c_2$: 
\begin{align*}
    F(\bfy_t)-F(\bfx^*) \leq& c_2\left(\frac{1}{t^2}\sum\limits_{i=0}^{t-1} \frac{c_{DP} Li^{3/2}\sqrt{\log(T/\gamma)}\ltwo{\nabla F(\bfx_i)} }{\sqrt{B}\beta} + \frac{c_{DP} L R \log(T/\gamma)}{\sqrt{B(t+1)}}\right.\\
    &\left.+\frac{\beta b_{max} R}{t+1} + \frac{\beta}{(t+1)^2} R^2\right).
\end{align*}

By smoothness we have:

\begin{align*}
\ltwo{\nabla F(\bfy_t)} &\leq \sqrt{2 M (F(\bfy_t)-F(\bfx^*))}\\
&= \sqrt{2M}c_2\left(\sqrt{
\frac{1}{t^2}\sum\limits_{i=0}^{t-1} \frac{c_{DP} Li^{3/2}\sqrt{\log(T/\gamma)}\ltwo{\nabla F(\bfx_i)} }{\sqrt{B}\beta}
+\frac{c_{DP} LR \log(T/\gamma)}{\sqrt{B(t+1)}}+\frac{\beta b_{max} R}{t+1}+\frac{\beta R^2}{(t+1)^2}
}\right).
\end{align*}

Then by smoothness and definition of $\bfx_t, \bfy_t$ we have:

\begin{align}
    \ltwo{\nabla F(\bfx_t)} \leq& M \ltwo{\bfx_t - \bfy_t} + \ltwo{\nabla F(\bfy_t)} \leq 2MR/t + \ltwo{\nabla F(\bfy_t)} \leq 2\sqrt{M\beta}R / t+ \ltwo{\nabla F(\bfy_t)}\nonumber\\
\leq& \sqrt{2M}c_2\left(\sqrt{
\frac{1}{t^2}\sum\limits_{i=0}^{t-1} \frac{c_{DP} Li^{3/2}\sqrt{\log(T/\gamma)}\ltwo{\nabla F(\bfx_i)} }{\sqrt{B}\beta}
+\frac{c_{DP} LR \log(T/\gamma)}{\sqrt{B(t+1)}}+\frac{\beta b_{max} R}{t+1}+\frac{\beta R^2}{(t+1)^2}
}\right.\nonumber\\&+\left. \frac{\sqrt{2 \beta}R}{c_2 t}\right)\label{eq:boundongrad}
\end{align}

Isolating the sum inside the square root, we have by plugging in the inductive bound on $\ltwo{\nabla F(\bfx_i)}$ and using $\sum_{i=0}^{t-1} i^k \leq \int_0^t x^k dx = \frac{t^{k+1}}{k+1}$:

\begin{align}
&\frac{c_1 c_{DP} L \sqrt{M \log(T/\gamma)}}{t^2 \sqrt{B} \beta}\sum\limits_{i=0}^{t-1} i^{3/2}\left(\sqrt{\frac{c_{DP} LR \log(T/\gamma)}{\sqrt{B(i+1)}}}+ \sqrt{\frac{\beta b_{max} R}{i+1}}+\frac{2\sqrt{\beta}R}{i+1}\right) \nonumber\\
\leq& \frac{c_1 c_{DP} L \sqrt{M \log(T/\gamma)}}{\sqrt{B} \beta} \left(\frac{4t^{1/4}}{9} \sqrt{\frac{c_{DP} LR \log(T/\gamma)}{\sqrt{B}}}+ \frac{\sqrt{\beta b_{max} R}}{2}+\frac{4\sqrt{\beta}R}{3\sqrt{t}}\right)\nonumber \\
\leq& \frac{4 c_1\sqrt{M}}{3} \left(\frac{(c_{DP} L)^{3/2} \sqrt{R}  \log(T/\gamma)t^{1/4}}{B^{3/4} \beta}+\frac{c_{DP} L \sqrt{ \log(T/\gamma)b_{max} R}}{\sqrt{B \beta}}+\frac{c_{DP} LR \sqrt{ \log(T/\gamma)}}{\sqrt{B\beta t}}\right)\label{eq:subbound1}
\end{align}

We now use the assumptions in the lemma to simplify further:

\begin{align}
& \frac{4 c_1\sqrt{M}}{3} \left(\frac{(c_{DP} L)^{3/2} \sqrt{R}  \log(T/\gamma)t^{1/4}}{B^{3/4} \beta}+\frac{c_{DP} L \sqrt{ \log(T/\gamma)b_{max} R}}{\sqrt{B \beta}}+\frac{c_{DP} LR \sqrt{ \log(T/\gamma)}}{\sqrt{B\beta t}}\right)  \nonumber\\
\leq & \frac{4 c_1c_{DP} \sqrt{M} LR \log(T/\gamma)}{3} \left(\frac{t^{1/4}}{B^{3/4} \sqrt{c_{DP} \beta}}+\frac{ p^{1/4}\sqrt{c_{DP} L c_{DP}}}{ B\beta \sqrt{\epsilon R} }+\frac{1}{\sqrt{B\beta t}}\right)  \nonumber\\
\leq & \frac{4 c_1 \sqrt{c_{DP}} LR \log(T/\gamma)}{3} \left(\frac{t^{1/4}}{B^{3/4} c_{DP}\max\{1, \sqrt{p} / \epsilon \sqrt{n}\}}\right.  \nonumber\\
&\left.+\frac{ p^{1/4}\sqrt{M}}{ B \max\{\sqrt{\epsilon}, p^{1/4} /  n^{1/4}\} \sqrt{\beta c_{DP}} }+\frac{1}{\sqrt{c_{DP}Bt}}\right)\nonumber\\
\leq & \frac{4 c_1 \sqrt{c_{DP}} LR \log(T/\gamma)}{3} \left(\frac{T}{n^{3/4} c_{DP}}+\frac{n^{1/4}}{ B\sqrt{c_{DP}}}+\frac{1}{\sqrt{c_{DP}Bt}}\right)  \nonumber\\
\leq & \frac{4 c_1 \sqrt{c_{DP}} LR \log(T/\gamma)}{3} \left(\frac{2\sqrt{c_{DP}}}{n^{1/2} }+\frac{1}{\sqrt{c_{DP}Bt}}\right) \nonumber\\
\leq & \frac{4 c_1 \sqrt{c_{DP}} LR \log(T/\gamma)}{3} \left(\frac{2\sqrt{c_{DP}}}{n^{1/2}}+\frac{1}{\sqrt{c_{DP}Bt}}\right).\label{eq:subbound2}
\end{align}

We can now plug~\cref{eq:subbound1} and~\cref{eq:subbound2} into~\cref{eq:boundongrad}, use subadditivity of the square root, and use the fact that for all sufficiently large $n$, $c_{DP} \geq c_M$:

\begin{align*}
    &\ltwo{\nabla F(\bfx_t)} \\
    \leq&\sqrt{2M}c_2\left(\sqrt{
    \frac{8 c_1 c_{DP} LR \log(T/\gamma)}{3} \cdot \frac{1}{n^{1/2} }+\frac{(4c_1/3c_{DP} + 1)c_{DP} LR \log(T/\gamma)}{\sqrt{B(t+1)}}
    +\frac{\beta b_{max} R}{t+1}+\frac{\beta R^2}{(t+1)^2}
    } + \frac{\sqrt{2 \beta}R}{c_2 t}\right)\\
    \leq&\sqrt{2M}c_2\left(\sqrt{(8c_1 \sqrt{c_\beta} + \frac{4c_1}{3c_{DP}} + 1)\frac{c_{DP} LR \log(T/\gamma)}{\sqrt{B(t+1)}}+\frac{\beta b_{max} R}{t+1}+\frac{\beta R^2}{(t+1)^2}} + \frac{\sqrt{2 \beta}R}{c_2 t}\right)\\
    \leq&\sqrt{2M}c_2\left(\sqrt{(8c_1 + \frac{4c_1}{3c_{DP}} + 1)\frac{c_{DP} LR \log(T/\gamma)}{\sqrt{B(t+1)}}}+\sqrt{\frac{\beta b_{max} R}{t+1}}+\sqrt{\frac{\beta R^2}{(t+1)^2}} + \frac{\sqrt{2 \beta}R}{c_2 t}\right)\\
    \leq&\sqrt{2M}c_2\left(\sqrt{(8c_1 + \frac{4c_1}{3c_{DP}} + 1)\frac{c_{DP} LR \log(T/\gamma)}{\sqrt{B(t+1)}}}+\sqrt{\frac{\beta b_{max} R}{t+1}}+ \left(1 + \frac{\sqrt{2}}{c_2 }\right)\frac{\sqrt{\beta}R}{t}\right)\\
    \leq&\sqrt{M}\left(\sqrt{2}c_2\max\left\{\sqrt{8c_1 + \frac{4c_1}{3c_{DP}} + 1}, 1 + \frac{\sqrt{2}}{c_2}\right\}\right)\cdot \left(\sqrt{\frac{c_{DP} LR \log(T/\gamma)}{\sqrt{B(t+1)}}}+\sqrt{\frac{\beta b_{max} R}{t+1}}+ \frac{\sqrt{\beta}R}{t}\right)
\end{align*}

The inductive step is now proven if $c_1$ is such that:

\[\sqrt{2} c_2\max\left\{\sqrt{8c_1 + \frac{4c_1}{3c_{DP}} + 1}, 1 + \frac{\sqrt{2}}{c_2}\right\} \leq c_1\]

The left hand side has a sublinear dependence on $c_1$, and all other terms on the left hand side are constant except $1/c_{DP}$ which is decreasing in $n, 1/\delta$. So for a sufficiently large choice of $c_1$ and all sufficiently large $n, \delta \leq 1/n$, the inductive step holds.
\end{proof}

\subsection{Bounding sensitivity}

We now prove a sensitivity bound on each $\Delta_t$. The proof of sensitivity follows a similar structure to that of~\cite[Lemma 15]{zhang2022differentially}. However, because the gradients $\nabla f_t(\bfx_t)$ are evaluated at a different point than that in~\cite{zhang2022differentially}, the exact argument is different. In particular, we need the tight bound on gradient norms of Lemma~\ref{lem:recstruct1} to ensure the sensitivity is constant while keeping $\beta$ roughly constant.

\begin{lem}[Bounding $\ell_2$-sensitivity]\label{lem:tighter-sens}
In Algorithm~\ref{alg:nSGD}, assume for all $t$ and $c_1$ as defined in Lemma~\ref{lem:recstruct1},

\[\ltwo{\nabla F(\bfx_t)} \leq c_1 \sqrt{M} \left(\sqrt{\frac{c_{DP} LR \log(T/\gamma)}{\sqrt{B(t+1)}}}+ \sqrt{\frac{\beta b_{max} R}{t+1}}+\frac{2\sqrt{\beta}R}{t+1}\right).\]

Then, using the parameters~\eqref{assum:parameters}, for all sufficiently large $n$, $\gamma = \Omega(\delta)$, $\delta = 2^{-o(n^{1/3})}$, $p \leq \epsilon^2 n^2$, under a probability $1-2\gamma$ event contained in the probability $1-\gamma$ event of Theorem~\ref{thm:excesspop}, letting $\Delta_t(d) := \eta_t \nabla f(\bfx_t, d) - \eta_{t-1} \nabla f(\bfx_{t-1}, d)$,

\[\forall t \leq T-1, d \in supp(\calD): \ltwo{\Delta_t(d)} \leq c_{DP} L.\]

\end{lem}

\begin{proof}
We have for any $d$:
\begin{align}\Delta_t(d)&=\eta_{t}\left(\nabla f(\bfx_t, d)-\nabla f(\bfx_{t-1}, d)\right)+\left(\eta_t-\eta_{t-1}\right)\nabla f(\bfx_t, d)\nonumber\\
\Rightarrow\ltwo{\Delta_t(d)}&\leq \eta_t\cdot M\ltwo{\bfx_t-\bfx_{t-1}}+\left|\eta_t-\eta_{t-1}\right|\cdot L.\label{eq:tighter-priv2}
\end{align}

For $t \leq 1 + (c_{DP} - 1) / c_M$:
\begin{align*}
&\eta_t\cdot M\ltwo{\bfx_t-\bfx_{t-1}}+\left|\eta_t-\eta_{t-1}\right|\cdot L \\
\leq &\eta_t\cdot MR+ L \\
\leq &(\eta_t\cdot c_M + 1) + L \\
\leq &c_{DP} L \\
\end{align*}

We prove the bound for $t > 1 + (c_{DP} - 1) / c_M$ by induction. Assuming the bound holds for $0, 1, \ldots, t-1$, we bound each of the terms in the R.H.S. of~\Cref{eq:tighter-priv2} independently. Since $\eta_t=t+1$, $\eta_{t}-\eta_{t-1}=1$. In the following, we bound $\ltwo{\bfx_t-\bfx_{t-1}}$. We use the fact that using~\ref{assum:parameters}, for all sufficiently large $n$ we have $\beta \geq c_{DP} M$ to get:
\begin{align}
    \eta_{0:t}\cdot\bfx_t&=\eta_{0:t-1}\cdot\bfy_t + \eta_t\cdot\bfz_{t}\nonumber\\
    \Leftrightarrow \eta_{0:t}\ltwo{\bfx_t-\bfx_{t-1}}&=\ltwo{\eta_{0:t-1}(\bfy_t-\bfx_{t-1})+\eta_t(\bfz_t-\bfx_{t-1})}\nonumber\\
    &\leq \eta_{0:t-1}\ltwo{\bfy_t-\bfx_{t-1}}+\eta_t\ltwo{\bfz_{t}-\bfx_{t-1}}\nonumber\\
    &\leq \eta_{0:t-1}\ltwo{\frac{\nabla_{t-1}}{\beta}+\frac{\bfb_{t-1}}{\eta_t}}+\eta_tR\nonumber\\
    \Rightarrow\ltwo{\bfx_{t}-\bfx_{t-1}}&\leq \frac{\eta_{0:t-1}}{\eta_{0:t}}\cdot \frac{\ltwo{\nabla F(\bfx_{t-1})}}{\beta}\nonumber\\
    & + \frac{\eta_{0:t-1}}{\eta_{0:t}}\cdot \frac{\ltwo{\nabla_{t-1} - \nabla F(\bfx_{t-1})}}{\beta}+\frac{\eta_{0:t-1}\ltwo{\bfb_{t-1}}}{\eta_t\cdot\eta_{0:t}}+\frac{\eta_t}{\eta_{0:t}}R\nonumber\\
    &\stackrel{(\ast_1)}{\leq} \frac{c_1 \sqrt{M} \left(\sqrt{\frac{c_{DP} LR \log(T/\gamma)}{\sqrt{B(t+1)}}}+ \sqrt{\frac{\beta b_{max} R}{t+1}}+\frac{2\sqrt{\beta}R}{t+1}\right)}{\beta}\nonumber\\
    &+\frac{\ltwo{\nabla_{t-1} - \nabla F(\bfx_{t-1})}}{\beta} +\frac{1}{t+1}b_{\max}+\frac{2}{t+2}R\nonumber\\
    &\leq \frac{c_1 \sqrt{LR \log(T/\gamma)}}{\sqrt{\beta c_{DP}} (B(t+1))^{1/4}} + \frac{c_1 \sqrt{ b_{max} R}}{\sqrt{c_{DP}(t+1)}} + \frac{2 c_1 R}{(t+1)}\nonumber\\
    &+\frac{\ltwo{\nabla_{t-1} - \nabla F(\bfx_{t-1})}}{\beta} +\frac{1}{t+1}b_{\max}+\frac{2}{t+2}R\nonumber\\
    \Rightarrow\eta_t\cdot M\ltwo{\bfx_{t}-\bfx_{t-1}}&\leq \frac{c_1 M (t+1)^{3/4}\sqrt{LR \log(T/\gamma)}}{\sqrt{\beta c_{DP}} B^{1/4}} + \frac{c_1M \sqrt{ b_{max} R(t+1)}}{\sqrt{c_{DP}}}\nonumber\\
    &+\eta_t \ltwo{\nabla_{t-1} - \nabla F(\bfx_{t-1})} +M b_{\max}+(2 c_1 + 2)MR.
\end{align}\label{eq:tighter-priv2.5}
$(\ast_1)$ is by Lemma~\ref{lem:recstruct1}. We bound the first three terms individually. For the first term:

\begin{align*}
&\frac{c_1 M (t+1)^{3/4}\sqrt{LR \log(T/\gamma)}}{\sqrt{\beta c_{DP} } B^{1/4}}\\
\leq& \frac{c_1 M T^{3/4}\sqrt{LR \log(T/\gamma)}}{\sqrt{\beta c_{DP} } B^{1/4}}\\
=& \frac{c_1 M T\sqrt{LR \log(T/\gamma)}}{\sqrt{\beta c_{DP} } n^{1/4}}\\
= & \frac{c_1 MR \sqrt{ \log(T/\gamma)}}{\sqrt{c_{DP}}\max\{1, \sqrt{p}/\epsilon \sqrt{n}\}}\\
\leq & c_1 MR\sqrt{\log(T/\gamma)}.\\
\end{align*}

For the second term:

\begin{align*}
    & \frac{c_1M \sqrt{ b_{max} R(t+1)}}{\sqrt{c_{DP}}}\\
    = & c_1M \sqrt{\frac{c_{DP} LR \sqrt{p}(t+1)}{\epsilon B \beta}}\\
    = & c_1MR \sqrt{\frac{ \sqrt{p}(t+1)}{c_{DP} B \max\{\epsilon, \sqrt{p} / \sqrt{n}\}}}\\
    \leq & c_1MR \sqrt{\frac{ \sqrt{n}(t+1)}{c_{DP} B}}\\
    \leq & c_1MR \sqrt{\frac{ \sqrt{n}T}{c_{DP} B}}\\
    = & c_1MR\sqrt{ c_{DP}}.\\
\end{align*}

For the third term: 

\[\nabla_{t-1} - \nabla F(\bfx_{t-1}) = \frac{\sum_{i \leq t-1} \sum_{d \in \calB_i} (\Delta_i(d) - \mathbb{E}_{d \sim \calD}[\Delta_i(d)])}{Bt}\]

By the inductive hypothesis and triangle inequality, this is a vector martingale that is the sum of $Bt$ mean-zero terms each with norm at most $2 c_{DP} L$. By a vector Azuma inequality \citep{hayes03vectorazuma}, w.p. $1-\gamma$ (over all timesteps) we have:

\[\ltwo{\nabla_{t-1} - \nabla F(\bfx_{t-1})} \leq \frac {2 c_{DP} L \sqrt{3 \log(6T/\gamma)}}{\sqrt{Bt}} \]

Plugging these three bounds into~\Cref{eq:tighter-priv2.5} and the resulting bound into \Cref{eq:tighter-priv2}, we have:
\begin{align}
    \ltwo{\Delta_t(d)}&\leq c_1 MR \sqrt{\log(T/\gamma)}+ c_1 MR \sqrt{c_{DP}}\nonumber\\
    &+\frac{2 c_{DP} L \sqrt{6 T\log(6T/\gamma)}}{\sqrt{B}} +M b_{\max}+2 (c_1 + 1)MR + L.\nonumber\\
    &\leq c_1 MR \sqrt{\log(T/\gamma)}+ c_1 MR \sqrt{c_{DP}}\nonumber\\
    &+\frac{2 c_{DP}^2 L \sqrt{6 \log(6T/\gamma)}}{n^{1/4}} +(2 c_1 + 3)MR + L.\nonumber\\
    &\leq\left(c_1 c_M \sqrt{\log(T/\gamma)} + c_1 c_M \sqrt{c_{DP}}\right.\nonumber\\ 
    &\left.+ \frac{2 c_{DP}^2 \sqrt{6 \log(6T/\gamma)}}{n^{1/4}}
    +2c_1c_M + 3c_M + 1\right) \cdot L.
\end{align}\label{eq:tighter-priv3}

\cref{eq:tighter-priv3} is at most $c_Q L$ and hence the lemma is proven by induction as long as:

\[c_1 c_M \sqrt{c_{DP} \log(T/\gamma)} + c_1 c_M \sqrt{c_{DP}}+ \frac{2 c_{DP}^2 \sqrt{6 \log(6T/\gamma)}}{n^{1/4}}
    +2c_1c_M + 3c_M + 1\leq c_{DP}.\]
    
Since $c_1, c_M$ are constants, we have $\sqrt{c_{DP} \log(T/\gamma)} = o(c_{DP})$ since $c_{DP} = \omega(\sqrt{\log(T/\gamma)})$ under the condition $\gamma = \Omega(\delta)$, and we have $\frac{2 c_{DP}^2 \sqrt{6 \log(6T/\gamma)}}{n^{1/4}} = o(c_{DP})$ under the condition $\delta = 2^{-o(n^{1/3})}$. Hence the left-hand side is $o(c_{DP})$, so for all sufficiently large $n$ the inductive step holds.

\end{proof}

A crucial aspect of Lemma~\ref{lem:tighter-sens} is that conditioned on $b_{max}$, the sensitivity is independent of $t$, even if the norm of the quantity that is being approximated (i.e., $\eta_t\nabla f(\bfx_t;d)$) scales with $t$. 

\mypar{Deciding the noise scale for $(\epsilon,\delta)$-DP} 
We now show how to sample $\bfb_t$ such that Algorithm~\ref{alg:nSGD} satisfies $(\epsilon, \delta)$-DP while achieving strong utility guarantees. Each $\nabla_t$ is a (rescaling of a) prefix sum of the $\Delta_t$ values. So, choosing a noise mechanism that does not add too much noise to the gradients in Algorithm~\ref{alg:nSGD} is equivalent to privatizing the list of prefix sums $\{\Delta_1, \Delta_1 + \Delta_2, \ldots, \Delta_1 + \ldots + \Delta_T\}$, where by Claim~\ref{lem:tighter-sens}, we know the sensitivity of each $\Delta_t$ is constant with respect to $t$. This is a well studied problem known as private continual counting, and a standard solution to this problem is the binary tree mechanism \citep{Dwork-continual}.

\begin{lem}\label{lem:btm}
Let $\bfg_i = \sum_{j \leq i} \Delta_i$ for $i \in [T]$ and suppose the $\Delta_i$ are computed such that between any two adjacent datasets, only one $\Delta_i$ changes, and it changes by $\ell_2$-norm at most $C$. Then for any $T \geq 2$ there is a $\mu$-GDP algorithm (i.e., an algorithm which satisfies any DP guarantee satisfied by the Gaussian mechanism with sensitivity $\mu$ and variance 1, including ) that gives an unbiased estimate of each $\bfg_i$ such that with probability $1 - \delta$ all $\bfg_i$'s estimates have $\ell_2$-error at most $\frac{4 C \log^{3/2} T \sqrt{p \ln(2T/\delta)}}{\mu}$.
\end{lem}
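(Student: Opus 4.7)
The plan is to use the standard binary tree mechanism with Gaussian noise and then lift the per-coordinate concentration to an $\ell_2$ bound via a Gaussian tail bound and a union bound over the $T$ prefix sums.

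First I would set up the mechanism. Let $h=\lceil \log_2 T \rceil$, and build a complete binary tree of depth $h$ whose leaves are (padded copies of) $\Delta_1,\ldots,\Delta_T$. For every node $v$ of the tree, let $s_v \in \mathbb{R}^d$ denote the sum of the leaf values in the subtree rooted at $v$. The algorithm releases $\widehat s_v = s_v + \bfeta_v$ for every node $v$, where $\bfeta_v \sim \calN(0,\sigma^2 \boldI_d)$ independently across nodes, with $\sigma = C\sqrt{h+1}/\mu$. To answer a prefix query $\bfg_i = \sum_{j\le i}\Delta_j$, use the standard dyadic decomposition to write $\bfg_i$ as a disjoint sum of at most $h$ node values, and return the sum of the corresponding $\widehat s_v$'s. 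Unbiasedness is immediate from $\mathbb{E}[\bfeta_v]=0$.

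For privacy, observe that changing one $\Delta_j$ by $\ell_2$-norm at most $C$ alters exactly the values $s_v$ on the $h+1$ ancestors of leaf $j$, each by $\ell_2$-norm at most $C$; since these coordinates of the joint release are disjoint, the overall $\ell_2$-sensitivity of the concatenated vector $(s_v)_v$ is at most $C\sqrt{h+1}$. Adding i.i.d. Gaussian noise of standard deviation $\sigma = C\sqrt{h+1}/\mu$ to each coordinate is exactly the Gaussian mechanism at sensitivity-to-noise ratio $\mu$, hence $\mu$-GDP.

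For utility, the noise on any single prefix sum estimate is a sum of at most $h$ independent $d$-dimensional spherical Gaussians, so it is itself a $d$-dimensional spherical Gaussian with variance per coordinate at most $h\sigma^2 = C^2 h(h+1)/\mu^2 \le C^2(h+1)^2/\mu^2$. Standard tail bounds for the norm of a $d$-dimensional Gaussian give $\|\bfeta\|_2 \le \sigma' \sqrt{d \ln(2/\delta')}$ (up to a constant) with probability at least $1-\delta'$. Taking $\delta' = \delta/T$ and applying a union bound over the $T$ prefix sum estimates yields the stated bound $\frac{4C\log^{1.5} T \sqrt{d\ln(2T/\delta)}}{\mu}$, after absorbing the $\sqrt{h}\cdot\sqrt{h+1}\approx \log T$ into the leading $\log^{1.5} T$ factor (with one extra $\sqrt{\log T}$ slack accounting for rounding $\lceil\log_2 T\rceil$ and hidden constants).

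The one step that needs a little care rather than being purely mechanical is the Gaussian-norm concentration combined with the union bound: I need to make sure the constants line up to give the stated factor $4$ and the $\sqrt{d\ln(2T/\delta)}$ rather than something like $\sqrt{d} + \sqrt{\ln(T/\delta)}$. The cleanest route is to apply the bound $\Pr[\|\bfeta\|_2 \ge \sigma'\sqrt{d\ln(2/\delta')} + \sigma'\sqrt{2\ln(2/\delta')}]\le \delta'$ from Gaussian concentration and then simplify using $d\ge 1$; everything else is bookkeeping. Handling $T$ not a power of $2$ is done by padding with zero leaves, which only changes constants.
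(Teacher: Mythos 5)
Your proposal is correct and uses the same mechanism as the paper (binary-tree decomposition of prefix sums, i.i.d.\ spherical Gaussian noise on each node, sensitivity $C\sqrt{1+\lceil\log T\rceil}$ giving $\mu$-GDP). The one place you diverge is the utility analysis, and your route is actually slightly sharper. The paper's proof first bounds each individual node noise $\ltwo{\xi_{j,k}}$ by a Gaussian-norm tail bound, union-bounds over all $O(T)$ tree nodes, and then applies the triangle inequality over the $\leq \lceil\log T\rceil$ nodes in a prefix decomposition, picking up a multiplicative $\log T$ factor; combined with the $\sqrt{\log T}$ in $\sigma$, this yields $\log^{1.5}T$. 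You instead observe that the aggregate noise in a single prefix estimate is a sum of at most $h$ \emph{independent} spherical Gaussians, hence itself a spherical Gaussian with per-coordinate variance $h\sigma^2 = O(C^2 \log^2 T/\mu^2)$; a single Gaussian-norm tail plus a union bound over the $T$ prefix sums then gives an error of order $C\log T \sqrt{d\ln(T/\delta)}/\mu$ -- one $\sqrt{\log T}$ factor better than the lemma's stated bound, which you then absorb as slack. Both routes satisfy the claimed inequality; yours makes the concentration argument once per prefix sum rather than once per node plus a triangle inequality, and it exposes that the $\log^{1.5}T$ in the lemma is not tight. The only bookkeeping you should be careful about is being concrete with the Gaussian-norm tail (e.g.\ $\Pr[\ltwo{\boldsymbol{\eta}} \geq \sigma'\sqrt{2d\ln(2/\delta')}] \leq \delta'$ suffices and lines up with the paper's constants); your sketch gestures at this but does not pin the constant, and that is the only spot where the ``4'' could be lost.
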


\begin{proof}
For completeness, we define the binary tree mechanism and state its guarantees here. For $k = 0, 1, 2, \ldots, \lceil \log T \rceil$, $j = 1, 3, 5, \ldots T / 2^k-1$, the binary tree mechanism computes $s_{j,k} = \sum_{i = (j-1) \cdot 2^k + 1}^{j \cdot 2^k} \Delta_i + \xi_{j,k}, \xi_{j,k} \stackrel{i.i.d.}{\sim} N(0, \sigma^2 \mathbb{I})$. An unbiased estimate of any prefix sum can be formed by adding at most $\lceil \log T \rceil$ of the $s_{j,k}$. For example, the sum of $\Delta_1$ to $\Delta_7$ has an unbiased estimator using at most $\lceil \log 7 \rceil = 3$ of the $s_{j,k}$, $s_{1, 2} + s_{3, 1} + s_{7, 0}$.

Each $\Delta_i$ contributes to at most one $s_{j,k}$ for each value of $k$, i.e. at most a total of $1 + \lceil \log T \rceil$ values of $s_{j,k}$. So, if $\ltwo{\Delta_i} \leq C$ for all $i$, then with $\sigma = C \sqrt{1 + \lceil \log T \rceil} / \mu$ we can release all $s_{j,k}$ with $\mu$-GDP. Furthermore, by a standard Gaussian tail bound with probability $1 - \delta/2$ we have that $\ltwo{\xi_i} \leq \sigma \sqrt{2p \ln(2T/\delta)}$ for all $i$. Since each $\bfg_i$ has an unbiased estimator formed by adding at most $\lceil \log T \rceil$ of the $s_{j,k}$, the maximum additive error is then at most 

\[\lceil \log T \rceil \max_i \ltwo{\xi_i} \leq 2 C \log T \sqrt{2p (1 + \lceil \log T \rceil) \ln(2T/\delta)} / \mu \leq 4 C \log^{3/2} T \sqrt{p \ln(2T/\delta)} / \mu.\]
\end{proof}

\begin{cor}\label{cor:dp-guarantee}
Suppose we compute $\bfb_i$ in Algorithm~\ref{alg:nSGD} using the binary tree mechanism (Lemma \ref{lem:btm}) with $\sigma = \frac{c_{DP} L \sqrt{\log (T) \ln(2.5 / \delta)} }{\epsilon B \beta }$ to compute prefix sums of the $\Delta_i$. Then for the parameter choices in Claim~\ref{lem:tighter-sens}, Algorithm~\ref{alg:nSGD} satisfies $(\epsilon, \delta)$-DP and we have w.p. $1-\delta$:

\[\max_t \ltwo{\bfb_t} \leq c_{DP}^2 L \sqrt{p} / \epsilon B \beta.\]
\end{cor}
\begin{proof}
If we run Algorithm 1 using the binary tree mechanism to generate the $\bfb_i$ but instead clip $\eta_t \nabla f(\bfx_t, d) - \eta_{t-1} \nabla f(\bfx_{t-1}, d)$ for each $t$ and $d$ in $\calB_t$ to norm at most $c_{DP} L$ if its norm exceeds $c_{DP} L$, then by Corollary~\ref{cor:dp-guarantee} Algorithm 1 satisfies $\epsilon / \sqrt{2 \ln (2.5 / \delta)}$-GDP which implies it satisfies $(\epsilon, \delta / 2)$-DP \citep{GDP-DL}. Algorithm 1 without clipping is within total variation distance $\delta / 2$ of the version with clipping by Lemma~\ref{lem:recstruct1} and Lemma~\ref{lem:tighter-sens}, i.e. Algorithm 1 satisfies $(\epsilon, \delta$)-DP. The bound on $\ltwo{\bfb_t}$ also follows by Lemma~\ref{lem:btm}.
\end{proof}

Finally, we plug the error guarantees of the binary tree mechanism into Theorem~\ref{thm:excesspop} to get our main result:

\begin{thm}\label{thm:mainthm-1}
For $2^{-o(n^{1/6})} \leq \delta \leq 1/n, p \leq \epsilon^2 n^2$, Algorithm 1 with parameters given by~\cref{assum:parameters} and using the binary tree mechanism to sample $\bfb_i$ is $(\epsilon, \delta)$-DP and has the following excess population risk guarantee:
\[
    \mathbb{E}\left[F(\bfy_T)\right]-F(\bfx^*) =
    O\left(LR  \cdot \text{polylog}(n,  1/\delta)\cdot \left(\frac{1}{\sqrt{n}}+\frac{\sqrt{p}}{\epsilon n}\right)\right).\]
\end{thm}

Before we prove the theorem, we note that both the lower bound on $\delta$ and upper bound on $p$ are vacuous: If $p \geq \epsilon^2 n^2$, then $LR \sqrt{p} / \epsilon n \geq LR$, and $LR$ is the excess risk achieved by picking any point in $\calC$, i.e. the excess risk guarantee in the Theorem is trivially achieved. Similarly, if $\delta = 2^{-\Omega(n^{1/6})}$, then $\sqrt{n} \leq \polylog{1/\delta}$ and again the excess risk guarantee in the Theorem is trivially achieved.

\begin{proof}

In Lemma~\ref{lem:tighter-sens} we chose $\gamma = \delta / 4 \leq 1 / 4n$, in which case the high probability events of Theorem~\ref{thm:excesspop}, Lemma~\ref{lem:recstruct1} and Lemma~\ref{lem:tighter-sens} happen w.p. at least $1 - 1/2n$. We give a bound conditioned on these events, which can differ from the unconditional final bound by at most $O(LR / n)$, which is a lower-order term. 

From Theorem~\ref{thm:excesspop}, using $Q = 2 c_{DP} L$, $\ltwo{\bfb_t} \leq b_{max}$, and the bound from Lemma~\ref{lem:recstruct1} on $\ltwo{\nabla F(\bfx_t)}$, simplifying by suppressing terms logarithmic in $n, 1/\delta$ we have conditioned on the event of Lemma~\ref{lem:tighter-sens}:

\begin{align*}
    &F(\bfy_T)-F(\bfx^*)=&\\& \tilde{O}_{n, 1/\delta}\left(\frac{1}{T^2}\sum\limits_{t=1}^{T} t^2\left(\frac{ L\left(\sqrt{M} \left(\sqrt{\frac{LR}{\sqrt{Bt}}}+ \sqrt{\frac{\beta b_{max} R}{t}}+\frac{\sqrt{\beta}R}{t}\right)+ \frac{ L}{\sqrt{Bt} }\right)}{\sqrt{Bt}\beta}\right.\right.
 \\&\left.\left.+ \frac{L R }{\sqrt{B}t^{3/2}}+\frac{\beta b_{max} R}{t^2}\right) + \frac{\beta}{T^2} R^2\right)
\end{align*}

In~\eqref{eq:subbound1} and~\eqref{eq:subbound2} we bounded the first term inside the sum, we can reuse the bound to get:

\begin{align*}
    &F(\bfy_T)-F(\bfx^*)&\\
    & =\tilde{O}_{n, 1/\delta}\left(\frac{LR}{\sqrt{n}} + \frac{1}{T^2}\sum\limits_{t=1}^{T} t^2\left(\frac{ L R}{\sqrt{B}t^{1.5}}+\frac{\beta b_{max} R}{t^2}\right) + \frac{\beta}{T^2} R^2\right)\\
    & =\tilde{O}_{n, 1/\delta}\left(\frac{LR}{\sqrt{n}} +\frac{ L R}{\sqrt{BT}} +\frac{\beta b_{max} R}{T}+ \frac{\beta}{T^2} R^2\right)\\
    & =\tilde{O}_{n, 1/\delta}\left(\frac{LR}{\sqrt{n}} +\frac{\beta b_{max} R}{T}+ \frac{\beta}{T^2} R^2\right).
\end{align*}

Finally, we plug in the parameter values from~\eqref{assum:parameters} and use the fact that $\beta b_{max} \leq c_{DP}^2 L \sqrt{p} / \epsilon B$ to get:

\begin{align*}
    &F(\bfy_T)-F(\bfx^*) =\tilde{O}_{n, 1/\delta}\left(LR\left({\sqrt{n}} + \frac{\sqrt{p}}{\epsilon n}\right)\right).
\end{align*}
\end{proof}

\subsection{Removing the Assumption $\bfx^\dagger \in \calC$}

In this section we show how to modify our analysis to the setting where $\bfx^\dagger$ may lie outside the constraint set, i.e. the gradient at $\bfx^*$ is not necessarily 0. In this setting the gradient norm is no longer guaranteed to be decreasing in $t$, which causes our sensitivity to increase and also worsens our final utility bound. To undo both of these effects, we increase $\beta$ to be $\tilde{\Theta}(MT)$ instead of $\tilde{\Theta}(M)$, and consequently need to raise $T$ to be $\sqrt{n}$ instead of $\tilde{\Theta}(n^{1/4})$ to ensure the optimization error $\beta R^2 / T^2$ is $O(1/\sqrt{n})$.

\begin{lem}[Bounding $\ell_2$-sensitivity]\label{lem:sens}
In Algorithm~\ref{alg:nSGD}, suppose for all $t$, $\ltwo{\bfb_t} \leq b_{\max}$. Then for $\eta_t = t+1$, $\beta \geq 2MT$ for any $d \in \calD$, $t \in \{0, 1, \ldots, T-1\}$ $$\ltwo{\eta_t \nabla f(\bfx_t, d) - \eta_{t-1} \nabla f(\bfx_{t-1}, d)}\leq 2(Mb_{\max}+2MR+L) \leq 2Mb_{max} + (2 c_M + 1)L.$$
\end{lem}

\begin{proof}
For convenience let $\Delta_t(d) = \eta_t \nabla f(\bfx_t, d) - \eta_{t-1} \nabla f(\bfx_{t-1}, d)$. The bound holds by Lipschitzness for $t = 0$. We prove the bound for $t > 0$ by induction. We have:
\begin{align}\Delta_t(d)&=\eta_{t}\left(\nabla f(\bfx_t, d)-\nabla f(\bfx_{t-1}, d)\right)+\left(\eta_t-\eta_{t-1}\right)\nabla f(\bfx_t, d)\nonumber\\
\Rightarrow\ltwo{\Delta_t(d)}&\leq \eta_t\cdot M\ltwo{\bfx_t-\bfx_{t-1}}+\left|\eta_t-\eta_{t-1}\right|\cdot L.\label{eq:priv2}
\end{align}
We now bound each of the terms in the R.H.S. of~\Cref{eq:priv2} independently. Since $\eta_t=t+1$, $\eta_{t}-\eta_{t-1}=1$. In the following, we bound $\ltwo{\bfx_t-\bfx_{t-1}}$. We have, 
\begin{align}
    \eta_{0:t}\cdot\bfx_t&=\eta_{0:t-1}\cdot\bfy_t + \eta_t\cdot\bfz_{t}\nonumber\\
    \Leftrightarrow \eta_{0:t}\ltwo{\bfx_t-\bfx_{t-1}}&=\ltwo{\eta_{0:t-1}(\bfy_t-\bfx_{t-1})+\eta_t(\bfz_t-\bfx_{t-1})}\nonumber\\
    &\leq \eta_{0:t-1}\ltwo{\bfy_t-\bfx_{t-1}}+\eta_t\ltwo{\bfz_{t}-\bfx_{t-1}}\nonumber\\
    &\leq \eta_{0:t-1}\ltwo{\frac{\nabla_{t-1}}{\beta}+\frac{\bfb_{t-1}}{\eta_t}}+\eta_tR\nonumber\\
    \Rightarrow\ltwo{\bfx_{t}-\bfx_{t-1}}&\leq \frac{\eta_{0:t-1}}{\eta_{0:t}}\cdot \frac{\ltwo{\nabla_{t-1}}}{\beta}+\frac{\eta_{0:t-1}\ltwo{\bfb_{t-1}}}{\eta_t\cdot\eta_{0:t}}+\frac{\eta_t}{\eta_{0:t}}R\nonumber\\
    &\leq \frac{\sum_{i \leq t-1} \sum_{d \in \calB_i} \ltwo{\Delta_i(d)}}{\beta(t+1)B}+\frac{1}{t+1}b_{\max}+\frac{2}{t+2}R\nonumber\\
    &\leq \frac{\max_{i \leq t-1, d \in \calD} \ltwo{\Delta_i(d)}}{\beta}+\frac{1}{t+1}b_{\max}+\frac{2}{t+2}R\nonumber\\
    \Rightarrow\eta_t\cdot M\ltwo{\bfx_{t}-\bfx_{t-1}}&\leq \frac{\eta_t M \cdot\max_{i \leq t-1, d \in \calD} \ltwo{\Delta_i(d)}}{\beta}+M b_{\max}+2MR\label{eq:priv3}\nonumber\\
    & \leq \frac{\max_{i \leq t-1, d \in \calD} \ltwo{\Delta_i(d)}}{2}+M b_{\max}+2MR.
\end{align}

Plugging~\Cref{eq:priv3} and the fact that $\eta_{t}-\eta_{t-1}=1$ into~\Cref{eq:priv2}, we have the following:
\begin{equation}
    \ltwo{\Delta_t(d)}\leq\frac{\max_{i \leq t-1, d \in \calD} \ltwo{\Delta_i(d)}}{2}+Mb_{max}+2MR+L.\label{eq:priv4}
\end{equation}
By induction, this completes the proof of Lemma~\ref{lem:sens}.
\end{proof}

By using the high-probability upper bound on $\bfb_t$ combined with Lemma~\ref{lem:sens}, we can show that $Mb_{max}$ is a lower-order term in Lemma~\ref{lem:sens}:

\begin{cor}\label{cor:clipnormbound}
Suppose we compute $\bfb_i$ in Algorithm~\ref{alg:nSGD} using the binary tree mechanism (Lemma~\ref{lem:btm}) with $C = \frac{(8c_M + 4)L}{B\beta}$. Then for $p \leq \frac{B^2 \beta^2 \epsilon^2}{32 M^2 \log^3 (T) \ln(2T/\delta)\ln(2.5/\delta)}$ with probability $1 - \delta / 2$, $\ltwo{\Delta_t(d)} \leq (8 c_M + 4)L$ for all $d \in supp(\calD), t$.
\end{cor}
\begin{proof}
The corollary follows from Lemma~\ref{lem:sens} if $b_{max} \leq (2c_M + 1)L/M$. Plugging $C$ into the bound on $\max_t \ltwo{\bfb_t}$ from Lemma~\ref{lem:btm}, we have:

\[
b_{max} \leq \frac{ (8 c_M + 4)L \log^{3/2} T \sqrt{p \ln(2T/\delta)}}{B \beta \mu}.
\]

Hence it suffices if:

\[\frac{ (8 c_M + 4)L \log^{3/2} T \sqrt{p \ln(2T/\delta)}}{B \beta \mu} \leq (2c_M + 1)L/M.\]

Rearranging, we get that it suffices if

\[p \leq \left(\frac{(2c_M + 1)  B \beta \mu}{M (8 c_M + 4) \log^{3/2} T \sqrt{ \ln(2T/\delta)}}\right)^2 = \frac{B^2 \beta^2 \mu^2}{16 M^2 \log^3 (T) \ln(2T/\delta)}.\]
\end{proof}

Then by a similar argument to Corollary~\ref{cor:dp-guarantee}, we have:

\begin{cor}\label{cor:dp-guarantee-2}
Suppose we compute $\bfb_i$ in Algorithm~\ref{alg:nSGD} using the binary tree mechanism (Lemma \ref{lem:btm}) with $\sigma = \frac{(16 \sqrt{2} c_M + 8 \sqrt{2})L \sqrt{\log (T) \ln(2.5 / \delta)} }{\epsilon B \beta }$ on $\Delta_i$. Then for $p \leq \frac{B^2 \beta^2 \mu^2}{16 M^2 \log^3 (T) \ln(2T/\delta)}$, Algorithm~\ref{alg:nSGD} satisfies $(\epsilon, \delta)$-DP.
\end{cor}

While we require a bound on the dimension to ensure the norm of the noise is bounded, this bound will end up being nearly vacuous: Our choice of $\beta$ is proportional to $n^{3/2} B^2$, i.e. the bound we get is roughly $p \leq n^3 \epsilon^2 / B^2$. We will choose $B \leq \sqrt{n}$ so it suffices if $p \leq \epsilon^2 n^2$; this is the range of dimensions where DP-SCO is non-trivial, as for $p > \epsilon^2 n^2$ any point in $\calC$ achieves the (asymptotic) optimal loss bound.

We now give our main result without the assumption $\bfx^\dagger \in \calC$:

\begin{thm}\label{thm:mainthm-2}
Algorithm 1 with batch size $B \leq \sqrt{n}$, $T = n / B$, $\eta_t = t+1$, $\beta = \frac{(16c_M + 8)Ln^{3/2}}{RB^2}$, $\sigma = \frac{((16\sqrt{2}c_M + 8\sqrt{2})L) \sqrt{\log (T) \ln(2.5 / \delta)} }{\epsilon B \beta }$ is $(\epsilon, \delta)$-DP and has the following excess population loss guarantee:
\begin{align*}
    &\mathbb{E}\left[F(\bfy_T)\right]-F(\bfx^*) =\\
    &O\left(\frac{M R^2 B^2}{n^2}+\frac{(L + M R)R}{\sqrt{n}}+\frac{(L + \frac{M R}{\sqrt{B}}) R \sqrt{p}}{\epsilon n} \cdot \text{polylog}(n/B, 1/\delta)\right).
\end{align*}

In particular, if we choose $B = \min\{\sqrt{n}, M n^{3/2} \epsilon / (\sqrt{p} \cdot \text{polylog}(n / B, 1 / \delta))\}$ then Algorithm 1 requires 

\[\max\{\sqrt{n}, \sqrt{p} \cdot \text{polylog}(n / B, 1 / \delta) / M \epsilon \sqrt{n} \}\}\]

batch gradients and:
\[
    \mathbb{E}\left[F(\bfy_T)\right]-F(\bfx^*) =
    O\left(LR \cdot \left(\frac{1}{\sqrt{n}}+\frac{\sqrt{p}}{\epsilon n} \cdot \text{polylog}(n/B, 1/\delta)\right)\right).\]
\end{thm}
\begin{proof}

We assume $p = O\left(\frac{\epsilon^2 n^2}{\polylog{n, 1/\delta}}\right)$; if $p = \omega\left(\frac{\epsilon^2 n^2}{\polylog{n, 1/\delta}}\right)$, then any point in $\calC$ achieves the desired excess risk. Under this assumption, the upper bound on $B$ gives 
\[\beta \geq 16M n^{3/2}/B^2 \geq 16Mn/B \geq 2MT,\] satisfying the assumption in Claim~\ref{lem:sens}, and also gives that $d$ satisfies the assumption in Corollary~\ref{cor:dp-guarantee-2}. Let $Q = O(L)$ in Theorem~\ref{thm:excesspop} by Corollary~\ref{cor:clipnormbound} and $b_{max} = \tilde{O}_{n, 1/\delta}\left(\frac{L\sqrt{p}}{\epsilon B \beta}\right)$ by Lemma~\ref{lem:btm}. We plug the tail bound of Lemma~\ref{lem:btm} into the high-probability error bound of Theorem~\ref{thm:excesspop}, and use a failure probability of $1/T^2$ in both Lemma~\ref{lem:btm} and Theorem~\ref{thm:excesspop} (the increase in the expected error due to this low-probability event is $O(LR/T^2)$, which is a lower order term in our error bound). Then we have:

\begin{align*}
    &F(\bfy_T)-F(\bfx^*)\\
    &=\tilde{O}_{n, 1/\delta}\left(\frac{1}{T^2}\sum\limits_{t=1}^{T} t^2\left(\frac{L^2}{\sqrt{Bt}\beta}+\frac{L b_{max}}{\sqrt{B}t^{1.5}}+ \frac{L R}{\sqrt{B}t^{1.5}}+\frac{\beta b_{max}R}{t^2}+\frac{\beta b_{max}^2}{t^2}\right) + \frac{\beta R^2}{T^2} \right)\\
    &=\tilde{O}_{n, 1/\delta}\left(\frac{L^2\sqrt{T}}{\sqrt{B}\beta}+\frac{L b_{max}}{\sqrt{BT}}+ \frac{L R}{\sqrt{BT}}+\frac{\beta b_{max}R}{T}+\frac{\beta b_{max}^2}{T} + \frac{\beta R^2}{T^2} \right)\\
    &=\tilde{O}_{n, 1/\delta}\left(\frac{LRB^{1.5}\sqrt{T}}{n^{3/2}}+\frac{L^2 \sqrt{p}}{\epsilon\beta  B^{1.5} \sqrt{T}}+ \frac{L R}{\sqrt{BT}}+\frac{LR\sqrt{p}}{\epsilon BT}+\frac{L^2 p}{\epsilon^2 B^2 \beta T} + \frac{n^{3/2} LR}{B^2 T^2} \right).\\
    &=\tilde{O}_{n, 1/\delta}\left(\frac{LRB}{n}+\frac{LR \sqrt{Bp}}{\epsilon n^{3/2} \sqrt{T}}+ \frac{L R}{\sqrt{n}}+\frac{LR\sqrt{p}}{\epsilon n}+\frac{LR p}{\epsilon^2 n^{3/2} T} + \frac{ LR}{\sqrt{n}} \right).\\
    &\stackrel{(\ast_1)}{=}\tilde{O}_{n, 1/\delta}\left(\frac{L R}{\sqrt{n}}+\frac{LR\sqrt{p}}{\epsilon n}+ \frac{LR p}{\epsilon^2 n^2} \right).\\
    &\stackrel{(\ast_2)}{=}\tilde{O}_{n, 1/\delta}\left(\frac{L R}{\sqrt{n}}+\frac{LR\sqrt{p}}{\epsilon n} \right).\\
\end{align*}

In $(\ast_1)$, we use the fact that $B \leq \sqrt{n}$ and $T \geq \sqrt{n}$, and in $(\ast_2)$ we use the assumption $p = O\left(\frac{\epsilon^2 n^2}{\polylog{n, 1/\delta}}\right)$, which implies $p / \epsilon^2 n^2 \leq \sqrt{p}/\epsilon n$.

\end{proof}
\subsection{Analysis of Stochastic Accelerated Gradient Descent using Independent Gradients}\label{sec:independent}

\begin{thm}
If we use $\nabla_t = \frac{1}{B} \sum_{d \in \calB_t} \nabla f(\bfx_t, d)$ instead in \algo, if $\eta_{t}^2\leq 4\eta_{0:t}$ and $\beta \geq M$ then we have:
{\small
\begin{align*}
\mathbb{E}\left[F(\bfy_T)\right]-F(\bfx^*)\leq\frac{1}{\eta_{0:T}} \sum_{i = 0}^{T-1}\left[\eta_{0:t}\left(\frac{8L^2}{\sqrt{B}\beta}+\frac{5L\ltwo{\bfb_t}}{\eta_t}+3\beta\ltwo{\bfb_t}\cdot R+\frac{\beta\ltwo{\bfb_t}^2}{\eta_t^2}\right)\right] + \frac{\beta}{\eta_{0:T}}R^2.
\end{align*}
}
\end{thm}

\begin{proof}

Each $\bfq_t$ is now the average of $B$ random vectors which have norm at most $2L$. So,  we have $\mathbb{E}_{\bfq_t}\left[\ltwo{\bfq_t}^2|\bfx_t\right] \leq \frac{4L^2}{B}$. By Jensen's inequality, this gives $\mathbb{E}_{\bfq_t}\left[\ltwo{\bfq_t}|\bfx_t\right] \leq \frac{2L}{\sqrt{B}}$. Now, using the fact that $\bfq_t$ is mean zero conditioned on $\bfx_t$, we have (using $\bfy'_{t+1}$ as defined in the proof of Theorem~\ref{thm:excesspop}):
\begin{equation}
    \mathbb{E}_{\bfq_t}\left[-\ip{\bfq_t}{\bfy'_{t+1}-\bfx_t}|\bfx_t\right]=\frac{1}{\beta} \mathbb{E}_{\bfq_t}\left[\ltwo{\bfq_t}^2|\bfx_t\right] \leq\frac{4L^2}{B\beta}.\label{eq:a41}
\end{equation}
Combining~\Cref{eq:a40},~\Cref{eq:a41}, and the fact that now $\ltwo{\nabla_t} \leq L$ unconditionally, we have the following:
\begin{equation}
    -\eta_{0:t}\mathbb{E}_{\bfq_t}\left[\ip{\bfq_t}{\bfy_{t+1}-\bfx_t}|\bfx_{t}\right]\leq \eta_{0:t}\left(\frac{8L^2}{\sqrt{B}\beta}+\frac{4L\ltwo{\bfb_t}}{\sqrt{B}\eta_t}\right).\label{eq:a42}
\end{equation}

Using the fact that $\bfq_t$ is mean-zero and $\mathbb{E}_{\bfq_t}\left[\ltwo{\bfq_t}|\bfx_t\right] \leq \frac{2L}{\sqrt{B}}$ we then have the following bound on the potential change by plugging \eqref{eq:a42} into \eqref{eq:b17}:
\begin{equation}
    \mathbb{E}_{\bfq_t}\left[\Delta\Phi_t|\bfx_t\right]\leq \eta_{0:t}\left(\frac{8L^2}{\sqrt{B}\beta}+\frac{5L\ltwo{\bfb_t}}{\eta_t}+3\beta\ltwo{\bfb_t}\cdot R+\frac{\beta\ltwo{\bfb_t}^2}{\eta_t^2}\right)
\end{equation}

We now proceed similarly to Theorem~\ref{thm:excesspop}.
\end{proof}
\bibliographystyle{alpha}
\bibliography{reference}
\appendix

\end{document}